\def\eqref#1{equation~\ref{#1}}
\def\1{\bm{1}}
\DeclareMathAlphabet{\mathsfit}{\encodingdefault}{\sfdefault}{m}{sl}
\SetMathAlphabet{\mathsfit}{bold}{\encodingdefault}{\sfdefault}{bx}{n}
\def\RR{\mathbb{R}}
\def\s{\sigma}
\def\NN{\mathbb{N}}
\def\ZZ{\mathbb{Z}}
\def\EE{\mathbb{E}}
\def\XX{\mathcal{X}}
\def\YY{\mathcal{Y}}
\def\HH{\mathcal{H}}
\newtheorem{theorem}{Theorem}[section]
\newtheorem{lemma}[theorem]{Lemma}
\newtheorem{corollary}[theorem]{Corollary}
\newtheorem{remark}[theorem]{Remark}
\title{Optimal Convergence Rates of Deep Convolutional Neural Networks: Additive Ridge Functions}
\author{\name Zhiying Fang \email fangzhiying@szpt.edu.cn \\
      \addr Institute of Applied Mathematics\\
      ${\text{Shenzhen Polytechnic}}^1$\\
      School of Data Science\\
      ${\text{The Chinese University of Hong Kong, Shenzhen}}^2$\\
      \AND
      \name Guang Cheng \email guangcheng@stat.ucla.edu\\
      \addr Department of Statistics \\
      University of California, Los Angeles\\}
\begin{document}

\maketitle

\begin{abstract}
Convolutional neural networks have shown impressive abilities in many applications, especially those related to the classification tasks. However, for the regression problem, the abilities of convolutional structures have not been fully understood, and further investigation is needed. In this paper, we consider the mean squared error analysis for deep convolutional neural networks. We show that, for additive ridge functions, convolutional neural networks followed by one fully connected layer with ReLU activation functions can reach optimal mini-max rates (up to a log factor). The input dimension only appears  in the constant of convergence rates. This work shows the statistical optimality of convolutional neural networks and may shed light on why convolutional neural networks are able to behave well for high dimensional input.
\end{abstract}

\section{Introduction}

Deep learning, based on deep neural networks structures and elaborate optimization techniques, has achieved great successes and empirically outperformed classical machine learning methods such as kernel methods in many applications in areas of science and technology \citep{kingma2014adam,le2011optimization,lecun2015deep,schmidhuber2015deep}. Especially, convolutional neural networks were considered to be the most powerful tool for various tasks such as image classification \citep{goodfellow2016deep}, speech recognition \citep{hinton2006fast} and sequence analysis in bio-informatics \citep{alipanahi2015predicting,zhou2015predicting} before the appearance of the Transformer structures \citep{vaswani2017attention,dosovitskiy2020image}. However, compared with significant achievements and developments in practical applications, theoretical insurances are left behind.

Recently, many researchers in various fields have done great works trying to explain the mysteries of convolutional neural networks \citep{d2019finding,neyshabur2020towards}, and most of the existing works focus on the approximation properties of convolutional structures. For vector inputs, a universality result for convolutional neural networks is first established in \cite{zhou2020universality}. Afterwards, \cite{zhou2020theory} further illustrates that deep convolutional neural networks are at least as good as fully connected neural networks in the sense that any output of a fully connected neural network can be reconstructed by a deep convolutional neural network with the same order of free parameters. It is also shown in \cite{fang2020theory} that functions in Sobolev spaces on the unit sphere or taking an additive ridge form can be approximated by deep convolutional neural networks with optimal rates. For square matrix inputs, authors in \cite{petersen2020equivalence} prove that if the target function is translation equivariant, then convolutional structures are equivalent to fully connected ones in terms of approximation rates with periodic convolution. Recently, \cite{he2021approximation} derives a decomposition theorem for large convolutional kernels and enables convolutional networks to duplicate one hidden layer neural networks by deep convolutional nets including structures of ResNet and MgNet.

For approximation ability of fully connected neural networks, \cite{yarotsky2017error} establishes upper and lower bounds of the complexity of deep ReLU networks when approximating functions in Sobolev spaces. \cite{petersen2018optimal} also derives optimal approximation ability for piecewise smooth functions with fixed layer ReLU networks. For functions in Besov spaces, approximation results are presented in \cite{suzuki2018adaptivity}. Compared with numerous works concerning mean squared error analysis of fully connected neural networks \citep{suzuki2018adaptivity,imaizumi2019deep,schmidt2020nonparametric}, only very few papers attempt to understand the convolutional structures from a statistical learning perspective. With the same spirit in \cite{zhou2020theory}, authors in \cite{oono2019approximation} show that any block-sparse fully connected neural network with $M$ blocks can be realized by a ResNet-type convolutional neural network with fixed-sized channels and filters by adding $O(M)$ parameters. Consequently, if a function class can be approximated by block sparse fully connected nets with optimal rates, then it can also be achieved by ResNet-type convolutional nets. Authors prove that ResNet-type convolutional neural networks can reach optimal convergence rates for functions in Barron and H$\ddot{\text{o}}$lder classes by first transforming a fully connected net to a block sparse type, then to ResNet-type convolutional nets.
Authors in \cite{mao2021theory} first establish approximation results for deep convolutional neural networks with one fully connected layer when the target function takes a composite form as $f \circ Q$ with a feature polynomial $Q$ and a univariate function $f$. Two groups of convolutional structures are applied to approximate functions $Q$ and $f$ consecutively. Then mean squared error rates are derived for the function class $f\circ Q$. Analysis shows that the estimation error decreases to a minimum as the network depth increases to an optimal value, and then increases as the depth becomes larger. According to the mini-max lower bound derived in our paper, the rate is sub-optimal. Universal consistency of pure convolutional structures is recently presented by \cite{lin2021universal} in the framework of empirical risk minimization.

Compared with previous works on mean squared error analysis of deep convolutional neural networks, we show that functions possessing inherent structures $\xi \cdot x$ in an additive form can be directly learned by deep convolutional networks. Formally, we consider mean squared error analysis for deep ReLU convolutional neural networks in the estimation of additive ridge functions. The function class takes the form of
$$\sum_{i=1}^m f_i(\xi_i \cdot x),$$
where for each $i$, $f_i$ satisfies some regularity conditions, $\xi_i$ can be considered as a projection vector and $\cdot$ is the inner product in $\mathbb{R}^d$. This function class is also known as the additive index models \citep{yuan2011identifiability} in statistics and related to the projection pursuit regression introduced by \cite{friedman1981projection}. It has been shown in \cite{diaconis1984nonlinear} that this function class can be used to approximate any square-integrable function to arbitrary precision. A precise definition will be
presented in Section 3.

For additive ridge functions, various investigations have been done including convergence rates, identifiability, iterative estimation in  \cite{chen1991estimation,chiou2004quasi,ruan2010dimension,yuan2011identifiability,bach2017breaking}. Also, with a similar structure to the function class above, minimax convergence rates have been presented in \cite{klusowski2016risk,klusowski2017minimax} when all the $f_i$ are sine functions or Hermite polynomials. However, none of these works presents mini-max lower rate for this type of functions when $f_i$ possess different regularities. Though optimal convergence rates are achieved in \cite{ruan2010dimension} by traditional reproducing kernels with regularized least squares scheme, shortcomings of this method are quite obvious when comparing to neural networks. For example, to achieve optimal convergence rates, the best regularization parameter is usually related to the regularity of the target function which is practically unknown and is often decided by cross validation. However, convolutional neural networks are more automatic in the sense that the function smoothness is not required when implementing the algorithm. Also, the smoothing spline algorithm is applied in a backfitting manner, whereas the convolutional neural network is more generic.

We show that deep convolutional neural networks followed by one fully connected layer are able to achieve the mini-max optimal convergence rates for additive ridge functions by a careful analysis of the covering number of deep convolutional structures. The additive index $m$ would affect the depth of our networks in a linear way which indicates the importance of depth of neural networks when the target function is complicated. The inherent structures $\xi_i \cdot x$ can represent different localized features of input $x$ when $\xi_i$s are sparse. The input dimension only appears in the constant of the final convergence rate.

In summary, the contributions of our work are as follows:
\begin{itemize}
    \item We conduct an in-depth covering number analysis for deep convolutional neural networks presented in \cite{fang2020theory}. Thanks to the simple structure and few free parameters of convolutional neural networks, we derive small complexity of this hypothesis space.
    \item We present a mini-max lower bound for additive ridge functions. As a direct consequence, the lower bound also applies when $\xi \cdot x$ is generalized to any polynomial function $Q(x)$. We show that for these two types of functions, lower rates are dimension independent.
    \item By combining the approximation result in \cite{fang2020theory} and our covering number analysis, we show that deep ReLU convolutional neural networks followed by one fully connected layer can reach optimal convergence rates for the regression problem for additive ridge functions and the input dimension only appears in the constant of the upper bound.
\end{itemize}

\section{Problem setting}

We first give a brief introduction to the structure of convolutional neural networks considered in this work.

For a sequence $w = \left(w_k\right)_{k\in \ZZ}$ supported in $\left\{0,1,\cdots,S\right\}$ and another one $x= \left(x_k\right)_{k\in\ZZ}$ supported in $\left\{1,2,\cdots,D\right\}$, the convolution of these two sequences is given by
$$(w*x)_i=\sum_{k\in\ZZ} w_{i-k} x_k =\sum_{k=1}^D w_{i-k}x_k, \qquad i\in\ZZ,$$
which is supported in $\left\{1,\cdots, D+S\right\}$.

By applying this convolutional operation, we know that the matrix in each layer of convolutional neural networks should be of the form
\begin{equation}\label{Toeplitz}
T^{w}=\left[\begin{array}{lllllll}
{w_{0}} & {0} & {0} & {0} & {\cdots} & {\cdots} & {0} \\
{w_{1}} & {w_{0}} & {0} & {0} & {\cdots} & {\cdots} & {0} \\
{\vdots} & {\ddots} & {\ddots} & {\ddots} & {\ddots} & {\ddots} & {\vdots} \\
{w_{S}} & {w_{S-1}} & {\cdots} & {w_{0}} & {0} & {\cdots} & {0} \\
{0} & {w_{S}} & {\cdots} & {w_{1}} & {w_{0}} & {0 \cdots} & {0} \\
{\vdots} & {\ddots} & {\ddots} & {\ddots} & {\ddots} & {\ddots} & {\vdots} \\
{0} & {\cdots} & {0} & {w_{S}} & {\cdots} & {w_{1}} & {w_{0}} \\
{0} & {\cdots} & {0} & {0} & {w_{S}} & {\cdots} & {w_{1}} \\
{\vdots} & {\ddots} & {\ddots} & {\ddots} & {\ddots} & {\ddots} & {\vdots} \\
{0} & {0} & {0} & {0} & {\cdots} & {0} & {w_{S}}
\end{array}\right],
\end{equation}
with $T^{w} \in \mathbb{R}^{(D+S) \times D}$. For input $x = (x_1, x_2, \cdots, x_d) \in \mathbb{R}^d$, a deep convolutional neural network with $J$ hidden layers $\{h^{(j)}: \mathbb{R}^d \rightarrow \mathbb{R}^{d_j}\}$ and widths $\{d_j = d_{j-1} + S^{(j)}\}$ can be defined iteratively by $h^{(0)}(x) = x$ and
$$
  h^{(j)}(x)=\s \left(T^{(j)}\  h^{(j-1)}(x)-b^{(j)}\right), \qquad j=1,\ldots, J,
$$
if we denote $T^{(j)} := T^{w^{(j)}}$ with $D=d_{j-1}$ and $S=S^{(j)}$ for $j=1,\cdots,J$. Throughout this paper, we take identical filter length $S^{(j)} = S \in \NN$ which implies $\left\{d_{j} = d+jS\right\}$ and take the activation function as the ReLU,
$$ \sigma(x) = \max \left\{0,x\right\},~~~~~~~x \in \RR.$$
Since the sums of the rows in the middle of the Toeplitz type matrix (\ref{Toeplitz}) are equal, we impose a restriction
for the bias vectors $\{b^{(j)}\}_{j=1}^J$ of the convolutional layers
\begin{equation}\label{restrrow}
b^{(j)}_{S+1} = \ldots = b^{(j)}_{d_j -S}, \qquad j=1, \ldots, J.
\end{equation}

After the last convolutional layer, we add one fully connected layer $h^{(J+1)}$  with a restricted matrix $F^{(J+1)}$ and a bias vector $b^{(J+1)}$. Precisely, we have

$$h^{(J+1)}(x)=\s\left(F^{(J+1)} h^{(J)}(x)-b^{(J+1)}\right). $$   

For the fully connected layer, we take
$F^{(J+1)}_{(j-1)(2N+3) +i}= 1$ for $j=1, \ldots, m, \ i=1, \ldots, 2N+3$ and 0 otherwise. We let $\textbf{1}_{2N+3}=(1,1,\ldots,1)^T\in \RR^{2N+3}$. Specifically, the matrix in the last layer takes the form of

\begin{equation}\label{fullmatrices}
F^{(J+1)} = \left[\begin{array}{cccccccc}
O&\textbf{1}_{2N+3} & O & O_{2N+3} &\cdots &O& O_{2N+3} &O\\
O & O_{2N+3} & O &\textbf{1}_{2N+3} & \cdots &O & O_{2N+3} &O\\
\vdots&\vdots & \vdots & \ddots & \ddots&\ddots& \ddots & \vdots \\
O&O_{2N+3} & O  & O_{2N+3} & \cdots&O& \textbf{1}_{2N+3} & O
\end{array}\right],
\end{equation}
where $F^{(J+1)}\in\RR^{(2N+3) m \times (d+JS)}$
for some positive integer $N\in\NN$ which can be considered as the order of the number of free parameters in the network.

We restrict the full matrix to take a simple form as (\ref{fullmatrices}) to further demonstrate abilities of convolutional structures. The hypothesis space induced by our network is given by
\begin{equation}\label{hypothesisspace}
\mathcal{H} := \mathcal{H}_{J,B,S,N} 
:=\left\{ {c \cdot h^{(J+1)}(x):\left\|w^{(j)}\right\|_\infty \leq B,}
{\left\|b^{(j)}\right\|_\infty \leq 2((S+1)B)^j,\left\|c\right\|_\infty \leq NB}\right\},
\end{equation}
and $F^{(J+1)}$ \text{ takes the form (\ref{fullmatrices}) with} $\left\|F^{(J+1)}\right\|_\infty \leq 1.$ Here $B$ is a constant depending on the target function space and the filter size $S$ that will be given explicitly in \textbf{Lemma \ref{lemma:upperboundforbias}}.

\subsection{Statistical learning framework}

Now we formulate regression problems in the setting of statistical learning theory.

Let $\mathcal{X}$ be the unit ball in $\RR^d$, that is, $\XX := \left\{x: \left\|x\right\|_2 \leq 1, x \in \RR^d\right\}$ and $\mathcal{Y} \subset \RR$. In the non-parametric regression model, we observe $n$ i.i.d. vectors $x_i \in \XX$ and $n$ responses $y_i \in \RR$ from the model
$$
y_i= f^*\left(x_i\right) + \epsilon_i,  ~~~~i = 1, \cdots,n,
$$
where the noise variables $\epsilon_i$ are assumed to satisfy $\EE\left(\epsilon_i | x_i\right) = 0$ and it is common to assume standard normal distribution for the noise variables. Our goal is to recover the function $f^*$ from the sample $\left\{(x_i,y_i)\right\}_{i=1}^n$.

In statistical learning theory, the regression framework is described by some unknown probability measure $\rho$ on the set $\mathcal{Z} =\XX \times \YY$.
The target is to learn the regression function $f_\rho(x)$ given by
$$f_\rho(x) = \int_{\mathcal{Y}} y d\rho(y|x),~~~x\in \mathcal{X},$$
where $\rho(y|x)$ denotes the conditional distribution of $y$ at point $x$ induced by $\rho$. It is easy to see that these two settings are equivalent by letting
$$f_\rho (x)= f^*(x).$$
For any measurable function, we define the $L^2$ prediction error as
$$\mathcal{E}(f):= \int_{\mathcal{Z}} (f(x)-y)^2 d\rho,$$
and clearly we know that $f_\rho(x)$ is the minimizer of it among all measurable functions. We assume that the sample $D= \left\{\left(x_i,y_i\right)\right\}_{i=1}^n$ is drawn from the unknown probability measure $\rho$. Let $\rho_{\mathcal{X}}$ be the marginal distribution of $\rho$ on $\mathcal{X}$ and $\left(L^2_{\rho_{\mathcal{X}}}, \left\| \cdot \right\|_{\rho_\XX}\right)$ be the space of $\rho_{\mathcal{X}}$ square-integrable functions on $\mathcal{X}$. For any $f \in L^2_{\rho_{\mathcal{X}}}$, by a simple calculation we know that
$$
\mathcal{E}(f) - \mathcal{E}(f_\rho) = \left\|f-f_\rho\right\|^2_{\rho_\mathcal{X}}.
$$
Since the distribution $\rho$ is unknown, we can not find $f_\rho$ directly. We thus use
\begin{equation}
f_{D,\mathcal{H}} = \arg \min_{f\in\mathcal{H}} \mathcal{E}_D(f),    
\end{equation}\label{empiricalminimizer}
to approximate $f_\rho$ where $\mathcal{H}$ is our hypothesis space and $\mathcal{E}_D(f)$ is defined to be the empirical risk induced by the sample given by
$$
\mathcal{E}_D(f) := \frac{1}{n} \sum_{i=1}^n \left(f(x_i)-y_i\right)^2.
$$
The \textcolor{red}{aim} of mean squared error analysis is to derive convergence rate of $\left\|f_{D,\mathcal{H}}-f_\rho\right\|^2_{\rho_\mathcal{X}}$.
We assume that $|y| \leq M$ almost everywhere and we have $\left|f_\rho(x)\right| \leq M$. We project the output function $f: \mathcal{X} \rightarrow \mathbb{R}$ onto the interval $[-M,M]$ by a projection operator
$$
\pi_{M}f(x) =\left\{
                    \begin{array}{lll}
                      f(x),~~&\text{if}~     -M \leq f(x) \leq M, \\
                      M,~~~~&\text{if}~     f(x) > M,\\
                      -M,~~~~&\text{if}~    f(x) < M,
                    \end{array}
                  \right.
$$
and we consider $\pi_{M}f_{D,\mathcal{H}}$ as our estimator to $f_\rho(x)$. This type of clipping operator has been widely used in various statistical learning papers such as \cite{suzuki2018adaptivity,oono2019approximation,mao2021theory}. 

\subsection{Additive ridge functions}

Additive ridge functions take the following form as
\begin{equation}\label{additiveridge}
f(x) =\sum_{j=1}^m g_j (\xi_j \cdot x),
\end{equation}
where $m$ is considered as a fixed constant.
For each $j$, $g_j(\cdot): \RR \rightarrow \RR$ is a univariate function and $\xi_j \cdot x$ represents the inner product in $\RR^d$ with $\xi_j \in \RR^{d}$ and $\left\|\xi_j\right\|$ is bounded by some constant $\Xi$. Since this constant $\Xi$ would only affect our results in a linear way, for simplicity of the proof, we take $\Xi=1$. Specifically, we require that $0 < \left\|\xi_j\right\| \leq 1$ and we take $g_j \in W^{\alpha}_\infty \left([-1,1]\right)$, the space of Lipschitz-$\alpha$ functions on $[-1,1]$ with the semi-norm $\left| \cdot \right|_{W^{\alpha}_\infty}$ being the Lipschitz constant. We let $G = \max_{j=1,\cdots,m} \left\|g_j\right\|_\infty$. As mentioned before, (\ref{additiveridge}) is also known as additive index models \citep{yuan2011identifiability} and related to the the projection pursuit regression \citep{friedman1981projection}. In particular, when $m=d$ and $(\xi_1,\cdots,\xi_m)$ is a permutation matrix, this function class reduces to the additive model \citep{hastie2017generalized} and it reduces to single index model \citep{duan1991slicing,hardle1993optimal,ichimura1993semiparametric} when $m=1$. More precisely, we consider the function space
\begin{equation}
\begin{aligned}\label{functionspace}
\Theta :=&\Theta_{m,\alpha,G,L}\\
:=&\{{f(x) =\sum_{j=1}^m g_j (\xi_j \cdot x) : g_j \in W^{\alpha}_\infty \left([-1,1]\right),}\\
&{ 0 < \left\|\xi_j\right\| \leq 1, \left\|g_j\right\|_\infty \leq G, \left\|g_j\right\|_{W^\alpha_\infty}\leq L}\},
\end{aligned}
\end{equation}
and we assume that the target function $f_\rho$ is in the set $\Theta$.

\section{Main results}
\subsection{Covering number analysis of deep convolutional neural networks}

The mean squared error analysis relies on the approximation abilities and the covering number of the hypothesis space. Before presenting the covering number analysis for the hypothesis space $\mathcal{H}$ (\ref{hypothesisspace}), we would need to first state \textbf{Theorem 2} in \cite{fang2020theory} which presents the approximation error to functions in the space (\ref{functionspace}) by deep convolutional neural networks.
\begin{theorem}\label{approximationerror}
Let $m\in \mathbb{N}$, $d\geq 3$, $2 \leq S \leq d$, $J= \left\lceil \frac{md-1}{S-1}\right\rceil$,
and $N \in \NN$. If $f \in \Theta$, then
there exists a deep neural network consisting of $J$ layers of CNNs with filters of length $S$ and bias vectors
satisfying (\ref{restrrow}) followed by one fully connected layer $h^{(J+1)}$ with width $m(2N+3)$ and connection matrix $F^{(J+1)}$ defined as (\ref{fullmatrices}) such that for some coefficient vector $c\in \RR^{m(2N+3)}$ there holds
$$\left\|f - c^{(J+1)}\cdot h^{(J+1)}\right\|_\infty
\leq \sum_{j=1}^m |g_j|_{W^\alpha_\infty} N^{-\alpha}. $$
The total number of free parameters $\mathcal{N}$ in the network can be bounded as
$$
\mathcal{N} \leq (3S+2)\left\lceil \frac{md-1}{S-1}\right\rceil +m(2N+2). $$
\end{theorem}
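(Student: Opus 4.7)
The plan is to build the network in two stages that are essentially decoupled. In the first stage, the $J$ convolutional layers realize the $m$ linear functionals $x \mapsto \xi_j \cdot x$, $j=1,\ldots,m$, as designated coordinates of the pre-activation vector at layer $J$. In the second stage, the fully connected head of width $m(2N+3)$, together with the block-diagonal readout $F^{(J+1)}$ of \eqref{fullmatrices} and a free coefficient vector $c$, applies a ReLU piecewise-linear approximation to each univariate profile $g_j$ and sums the $m$ results to approximate $f = \sum_{j=1}^m g_j(\xi_j \cdot x)$.

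The first stage rests on a filter-factorization lemma: any $\xi \in \RR^d$ can be written as an iterated convolution $\xi = w^{(1)} * \cdots * w^{(K)}$ of filters each supported in $\{0,1,\ldots,S\}$. Chaining the $m$ factorizations for $\xi_1, \ldots, \xi_m$ end-to-end inside one shared CNN, with the necessary padding between blocks so that widths match, produces the stated depth $J = \lceil (md-1)/(S-1)\rceil$ and places the $m$ inner products at distinct coordinates of the hidden vector $h^{(J)}$. To propagate through each intermediate ReLU linearly on the coordinates that carry a partial convolution, one chooses the bias $b^{(j)}$ large enough to shift the relevant pre-activation into the nonnegative regime uniformly for $x \in \XX$; this is exactly the role of the bias bound $\|b^{(j)}\|_\infty \leq 2((S+1)B)^j$ in \eqref{hypothesisspace}, which necessarily grows geometrically with depth because amplitudes compound through composition. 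The equal-bias restriction \eqref{restrrow} in the interior rows is compatible with the Toeplitz shape \eqref{Toeplitz}, since the interior rows of $T^{w}$ are translates of one another, so a uniform bias suffices there.

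The second stage is a standard univariate ReLU approximation: on $[-1,1]$ a continuous piecewise linear function with $2N+3$ shifted ReLU pieces (knots at $2N$ equispaced interior nodes plus two boundary units) achieves sup-norm error $|g_j|_{W^\alpha_\infty} N^{-\alpha}$ for any $g_j \in W^\alpha_\infty([-1,1])$. The block pattern of $F^{(J+1)}$ is designed precisely so that the $j$-th block of $2N+3$ output neurons is fed from the coordinate of $h^{(J)}$ that stores $\xi_j \cdot x$, and the readout $c \in \RR^{m(2N+3)}$ carries the slope coefficients of these piecewise linear interpolants. Summing the $m$ per-ridge errors yields $\sum_{j=1}^m |g_j|_{W^\alpha_\infty} N^{-\alpha}$, and counting $S+1$ filter coefficients plus the admissible distinct bias entries per convolutional layer, together with $m(2N+2)$ free parameters in the head, gives the bound on $\mathcal{N}$.

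The main obstacle is the combinatorial book-keeping that interlaces the $m$ separate filter-factorizations inside a single shared CNN without cross-contamination: the partial convolutions for different $j$ must occupy disjoint coordinate ranges of the expanding vectors $h^{(j)} \in \RR^{d+jS}$, and the biases must simultaneously preserve the linearity of ReLU on the coordinates storing wanted partial convolutions while zeroing-out the remaining coordinates in a consistent way. This layout is exactly the construction carried out in Theorem 2 of \cite{fang2020theory}; since the present paper imports that approximation result verbatim, the detailed combinatorial argument is not re-derived here and we refer to that reference for the remaining technicalities.
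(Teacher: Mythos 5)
Your high-level plan (convolutional layers realize the inner products $\xi_j\cdot x$, the fully connected head plus $c$ performs a univariate piecewise-linear ReLU approximation of each $g_j$, errors add over $j$, parameters are counted layer by layer) is the same as the paper's, which itself imports the construction of \cite{fang2020theory}. However, the mechanism you give for the first stage has a genuine gap: factorizing each $\xi_j$ \emph{separately} into filters supported in $\{0,\dots,S\}$ and chaining the $m$ factorizations end-to-end does not place the $m$ inner products at distinct coordinates of $h^{(J)}$. By Lemma \ref{productofmatrix}, the product of the chained Toeplitz matrices is the Toeplitz matrix of the convolution of \emph{all} the filters, i.e.\ of the convolution of the $m$ sequences you factorized; its rows are inner products of $x$ with shifted windows of that single convolved sequence, not with the individual $\xi_j$'s, so the blocks meant for $\xi_2,\dots,\xi_m$ re-convolve (and destroy) the coordinate that carried $\langle\xi_1,x\rangle$, and no choice of biases can prevent this since each layer applies one shared filter to every coordinate. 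The depth bookkeeping also fails: chaining separate factorizations costs $m\left\lceil\frac{d-1}{S-1}\right\rceil$ layers, which in general differs from the stated $J=\left\lceil\frac{md-1}{S-1}\right\rceil$. The missing idea is the concatenation trick: one defines a \emph{single} sequence $W$ supported in $\{0,\dots,md-1\}$ by $W_{(j-1)d+(d-i)}=(\xi_j)_i$ (the reversed $\xi_j$'s laid end to end) and factorizes this one sequence by Lemma \ref{convolutionalfactorization}; then $T^{(J)}\cdots T^{(1)}=T^{(J,1)}$ is the Toeplitz matrix of $W$, whose row $kd$ is exactly $\xi_k$, so $\left(T^{(J,1)}x\right)_{kd}=\langle\xi_k,x\rangle$ simultaneously for all $k=1,\dots,m$ with the claimed depth.

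Relatedly, your description of the biases misstates their role: nothing is "zeroed out." The biases are chosen as in (\ref{biasvector}) so that every pre-activation is shifted by $B^{(j)}=\prod_{p\le j}\|w^{(p)}\|_1$ into the nonnegative regime, hence ReLU acts as the identity on the whole vector and $h^{(J)}(x)=T^{(J,1)}x+B^{(J)}\mathbf{1}$; the unwanted coordinates are simply never read, because $F^{(J+1)}$ in (\ref{fullmatrices}) selects only the coordinates $kd$, and the last bias $B^{(J)}+t_i$ removes the shift, producing $h^{(J+1)}_{(j-1)(2N+3)+i}=\sigma\left(\langle\xi_j,x\rangle-t_i\right)$. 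The coefficients are then $c=N\mathcal{L}_N\left(\{g_j(t_i)\}\right)$ blockwise, so that $c\cdot h^{(J+1)}=\sum_j L_{\mathbf{t}}(g_j)(\langle\xi_j,x\rangle)$ and Lemma \ref{spline} gives the $N^{-\alpha}$ error. With the concatenation step corrected, the remainder of your sketch (the $2N+3$-knot univariate approximation, $3S+2$ parameters per convolutional layer, $m(2N+2)$ in the head) matches the paper's argument.
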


\begin{remark}
In this convolutional neural network, each layer only contains one filter and padding is not considered. If we use multiple convolutional filters, then the target function can be more complicated. As a simple extension of the additive ridge functions, we can take the target function in the form of $\sum_{i=1}^m g_i(\sum_{j=1}^t \zeta_{i,j} \cdot x)$ 
if we consider $t$ filters in each layer. In this paper, the convolutional layers are used to learn the features $\xi_j \cdot x$ and the last fully connected layer is used to approximate functions $g_j$. Thus, the same approximation rate can also be achieved for this type of function by a two-layer fully connected networks with the same order of numbers of free parameters.
\end{remark}
To calculate the covering number of our target space, we first need Cauchy bound for polynomials and Vietas Formula to bound the infinity norm of filters in each layer. Proofs will be given in supplementary materials.

\begin{lemma}\label{lemma:cauchy}
If $W = \left\{W_j\right\}_{j \in \mathbb{Z}}$ is a real sequence supported in $\left\{0,\cdots,K\right\}$ with $W_K =1$, then all the complex roots of its symbol $\tilde{W}(z) = \sum_{j=0}^K W_j z^j$ are bounded by $1 + \max_{j=0,\cdots,K-1} \left|W_j\right|$, the Cauchy Bound of $\tilde{W}$. If we factorize $\tilde{W}$ into polynomials of degree at most $S$, then all the coefficients of these factor polynomials are bounded by
$2^S\left(1+ \max_{j=0,\cdots,K-1}\left|W_j\right|\right)^S.$
\end{lemma}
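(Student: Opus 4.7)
The plan is to prove the two claims separately, each being a classical piece of polynomial algebra.

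For the Cauchy bound, I would set $M := \max_{j=0,\ldots,K-1}|W_j|$ and show that every $z$ with $|z| > 1+M$ satisfies $\tilde{W}(z) \neq 0$. The estimate is a direct application of the reverse triangle inequality and the geometric series: for such $z$,
\begin{equation*}
|\tilde W(z)| \;\geq\; |z|^K - \sum_{j=0}^{K-1} |W_j|\,|z|^j \;\geq\; |z|^K - M\,\frac{|z|^K-1}{|z|-1} \;>\; |z|^K\!\left(1 - \frac{M}{|z|-1}\right) \;\geq\; 0,
\end{equation*}
where the last inequality uses $|z|-1 > M$. Hence every root $r$ of $\tilde{W}$ satisfies $|r| \leq 1+M$, which is exactly the claimed Cauchy bound.

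For the second claim, I would write a factor polynomial of degree $s \leq S$ in monic form (the full polynomial $\tilde{W}$ is monic because $W_K=1$, so the factorization can be taken monic without loss of generality) as
\begin{equation*}
p(z) \;=\; \prod_{i=1}^{s} (z-r_i) \;=\; \sum_{k=0}^{s} (-1)^{s-k}\, e_{s-k}(r_1,\ldots,r_s)\, z^{k},
\end{equation*}
where $e_\ell$ denotes the $\ell$-th elementary symmetric polynomial. This is the Vieta formula. Each root $r_i$ of $p$ is also a root of $\tilde{W}$ and therefore satisfies $|r_i| \leq 1+M$ by the first part. The elementary symmetric polynomial $e_\ell$ is a sum of $\binom{s}{\ell}$ products of $\ell$ roots, so
\begin{equation*}
|e_\ell(r_1,\ldots,r_s)| \;\leq\; \binom{s}{\ell}(1+M)^{\ell} \;\leq\; 2^{s}(1+M)^{s} \;\leq\; 2^{S}(1+M)^{S}.
\end{equation*}
This bound applies uniformly to every coefficient of $p$, yielding the desired estimate.

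I do not expect any real obstacles here; the proof is entirely routine once the setup is fixed. The only minor bookkeeping points worth stating clearly are (i) that we may take the factors to be monic (so their leading coefficients are $1$, bounded trivially), and (ii) that the bound $\binom{s}{\ell}\leq 2^s$ together with $1+M\geq 1$ lets us replace $s$ by the uniform upper bound $S$ in both the binomial and the power, producing a single clean estimate $2^S(1+M)^S$ that does not depend on the particular factor considered.
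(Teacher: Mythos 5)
Your proof is correct and follows exactly the route the paper indicates for this lemma (the classical Cauchy bound via the geometric-series estimate for $|z|>1+\max_j|W_j|$, then Vieta's formulas bounding each coefficient by an elementary symmetric function of at most $S$ roots, giving $\binom{s}{\ell}(1+M)^{\ell}\le 2^S(1+M)^S$); the paper defers this proof to supplementary material but names precisely these two tools. Your bookkeeping remarks (monic factors, $1+M\ge 1$) are the right ones, so there is nothing to add.
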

By applying the previous lemma, we are able to bound the magnitude of filters in each layer.
\begin{lemma}\label{lemma:upperboundforbias}
Let $2\leq S \leq d$. For the deep convolutional neural networks constructed in this paper with $J$ convolutional layers and $1$ fully connected layer satisfying \textbf{Theorem \ref{approximationerror}}, there exists a constant $B = B_{\xi, S,G}$ depending on $\xi$, $S$ and $G$ such that $\left\|w^{(j)}\right\|_\infty \leq B,~~~j=1,\cdots,J_1,$ and
$\left\|F\right\|_\infty \leq 1,~~~\left\|c^{(J+1)}\right\|_\infty \leq NB,$ where $B$ is given by $B = \max \left\{ 2^S  \left(1 + \left| \frac{1}{ \left(\xi_m\right)_l} \right|\right)^S,4G  \right\}.$
\end{lemma}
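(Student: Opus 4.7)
The plan is to reduce everything to the factorization argument underlying the construction in Theorem \ref{approximationerror} and then apply Lemma \ref{lemma:cauchy} directly. Recall that the CNN in that theorem is built by fixing, for each ridge direction $\xi_j$, a single long filter whose convolution with $x$ produces (after appropriate ReLU activations and bias shifts) the inner product $\xi_j\cdot x$; this long filter is then factored into $J$ successive convolutional filters of length $\leq S+1$. After normalizing the leading coefficient of each long-filter symbol to $1$, the free coefficients become (up to sign) the entries of the normalized projection vector. In particular, for the critical direction $\xi_m$ one picks a nonzero entry $(\xi_m)_l$ to normalize by, which produces a symbol whose non-leading coefficients are bounded in modulus by $|1/(\xi_m)_l|$.

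Given this, I would carry out the proof as follows. First, I would recall the exact form of the symbol $\widetilde W(z)$ of the master filter from \cite{fang2020theory} and verify that, after dividing through by $(\xi_m)_l$, its coefficients $W_0,\dots,W_{K-1}$ satisfy $\max_j|W_j|\leq |1/(\xi_m)_l|$. Then Lemma \ref{lemma:cauchy} immediately yields that every coefficient of any degree-$\leq S$ factor of $\widetilde W(z)$ is bounded by
\[
2^S\Bigl(1+\bigl|\tfrac{1}{(\xi_m)_l}\bigr|\Bigr)^{\!S}.
\]
Since each convolutional filter $w^{(j)}$ is constructed as one of these factor polynomials (possibly padded with zeros when its true degree is less than $S$), this yields $\|w^{(j)}\|_\infty\leq 2^S(1+|1/(\xi_m)_l|)^S\leq B$ for every $j=1,\dots,J$.

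For the remaining two bounds the work is lighter. The matrix $F^{(J+1)}$ is given explicitly by (\ref{fullmatrices}) as a block matrix whose nonzero entries are all equal to $1$, so $\|F^{(J+1)}\|_\infty\leq 1$ by inspection. For the last-layer coefficient vector $c^{(J+1)}$, recall that its entries are the coefficients of the univariate piecewise-linear ReLU approximant to each $g_j$ on $[-1,1]$ built from $2N+3$ hat functions at the knots $k/N$. Because each $g_j$ is bounded by $G$, a straightforward telescoping of consecutive hat coefficients (each a finite difference of $g_j$ values at neighboring knots) yields $\|c^{(J+1)}\|_\infty\leq 4G\cdot N\leq NB$, using $B\geq 4G$ from the definition of $B$.

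The main obstacle I expect is the first step, namely pinning down the precise normalization convention used for the master symbol in \cite{fang2020theory} so that the Cauchy bound really comes out to $1+|1/(\xi_m)_l|$ rather than some larger quantity involving other entries of $\xi_j$; one must choose the index $l$ so that $(\xi_m)_l$ is the largest-modulus nonzero coordinate (this is why the statement involves a particular coordinate rather than the whole vector $\xi$). Once that normalization is in place, the rest is a routine application of Lemma \ref{lemma:cauchy} and the explicit form of $F^{(J+1)}$ and of the univariate ReLU interpolant.
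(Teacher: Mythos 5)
Your proposal is correct and follows essentially the same route as the paper: normalize the master filter $W$ so its leading coefficient has modulus one, apply Lemma~\ref{lemma:cauchy} to bound every degree-$\leq S$ factor by $2^S\left(1+\left|1/(\xi_m)_l\right|\right)^S$, read off $\left\|F^{(J+1)}\right\|_\infty\leq 1$ from (\ref{fullmatrices}), and bound $c$ by $4NG$ since its entries are $N$ times second differences of values of $g_j$ bounded by $G$. The only correction concerns your closing remark: $l$ is not chosen as the largest-modulus coordinate; under the indexing $W_{(j-1)d+(d-i)}=(\xi_j)_i$ it is forced to be the first nonzero coordinate of $\xi_m$ (the one producing the leading coefficient $W_{md-l}$), and the normalized coefficients are bounded by $1/\left|(\xi_m)_l\right|$ automatically because every coefficient of $W$ is an entry of some $\xi_j$ with $\left\|\xi_j\right\|_2\leq 1$, so no special choice of $l$ is needed.
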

After bounding the filters, we can bound bias vectors and output functions in each layer.

\begin{lemma}\label{lemma:biasandoutput}
Let $2\leq S \leq d$. For the deep convolutional neural networks constructed in this paper with $J$ convolutional layers and $1$ fully connected layer satisfying \textbf{Theorem \ref{approximationerror}}, we have for $j = 1,\cdots , J+1$ that $ \left\|b^{(j)}\right\|_\infty \leq 2\left(\left(S+1\right)B\right)^j,$ and
\begin{equation}\label{outputbound}
\left\|h^{(j)}(x)\right\|_\infty\leq (2j+1)((S+1)B)^j.
\end{equation}
\end{lemma}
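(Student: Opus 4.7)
The plan is to prove both bounds simultaneously by induction on $j$, observing that the output bound at level $j$ is an easy consequence of the bias bound at level $j$ together with the output bound at level $j-1$. The hypothesis space already declares $\|b^{(j)}\|_\infty \leq 2((S+1)B)^j$ as a constraint, so the content of the lemma is that the particular bias vectors produced by the construction of \thmref{approximationerror} actually satisfy it, from which the output bound then follows routinely.

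For the output bound, I would argue as follows. Each row of the Toeplitz matrix $T^{(j)}$ contains at most $S+1$ nonzero entries, each of magnitude at most $\|w^{(j)}\|_\infty \leq B$ by \lemref{lemma:upperboundforbias}, so the induced $\ell_\infty\!\to\!\ell_\infty$ operator norm of $T^{(j)}$ is at most $(S+1)B$. Combined with the inductive hypothesis $\|h^{(j-1)}\|_\infty \leq (2j-1)((S+1)B)^{j-1}$ and the assumed bias bound, the 1-Lipschitzness of $\sigma$ with $\sigma(0)=0$ gives
\[
\|h^{(j)}(x)\|_\infty \leq \|T^{(j)} h^{(j-1)}(x)\|_\infty + \|b^{(j)}\|_\infty \leq (2j-1)((S+1)B)^j + 2((S+1)B)^j = (2j+1)((S+1)B)^j,
\]
which is exactly the required output bound. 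The base case $j=1$ uses $\|x\|_\infty \leq \|x\|_2 \leq 1$ on $\XX$ and the explicit bias of the first layer from \thmref{approximationerror}.

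The harder step, and the main obstacle, is verifying the bias bound $\|b^{(j)}\|_\infty \leq 2((S+1)B)^j$ for the biases produced by \thmref{approximationerror}. I would trace through the construction in \cite{fang2020theory}: recall that the restriction (\ref{restrrow}) forces the middle entries of $b^{(j)}$ to coincide, and each bias entry is either one of a handful of \emph{boundary adjustments}, or a threshold designed to trigger the piecewise linear structure of ReLU so that the cascade of convolutions implements evaluations of the linear forms $\xi_j \cdot x$ at prescribed grid points. In either case, a bias entry can be written as a bounded linear combination of values of the form $(T^{(j-1)}\cdots T^{(1)} x)_k$, possibly shifted by threshold constants of the same order. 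Using the $\ell_\infty$ operator-norm bound $(S+1)B$ per layer, each such value is bounded by $((S+1)B)^{j-1}\cdot \|x\|_\infty \leq ((S+1)B)^{j-1}$, and the bias bound $2((S+1)B)^j$ then follows, the prefactor $2$ absorbing the constant number of terms being combined together with the extra factor $(S+1)B$ from the current layer.

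In summary, the two bounds feed into one another through a single induction; the output bound is essentially a one-line triangle-inequality argument once the bias bound is in hand, while the bias bound is where real work is needed and where the specific structure of the construction in \thmref{approximationerror} must be invoked. The factors $(2j+1)$ and $2$ are chosen precisely so that the induction closes without any growth other than the unavoidable geometric factor $((S+1)B)^j$.
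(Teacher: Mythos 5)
Your treatment of the output bound is exactly the paper's argument: the $\ell_\infty\!\to\!\ell_\infty$ norm of $T^{(j)}$ is at most $(S+1)B$, the ReLU is $1$-Lipschitz with $\sigma(0)=0$, and the induction $(2j-1)((S+1)B)^{j}+2((S+1)B)^{j}=(2j+1)((S+1)B)^{j}$ closes; the only point you gloss over is $j=J+1$, where the matrix is $F^{(J+1)}$ rather than a Toeplitz matrix, but since each row of $F^{(J+1)}$ has a single entry equal to $1$ the same estimate goes through.

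The genuine gap is in the bias bound, which you yourself identify as the place where the real work lies but then only sketch, and the sketch mischaracterizes the construction. The bias vectors of \thmref{approximationerror} do not depend on $x$ and are not ``thresholds triggering ReLU at grid points'' built from values $(T^{(j-1)}\cdots T^{(1)}x)_k$ (a bias cannot be a function of the input); they are the explicit shift vectors $b^{(1)}=-\|w^{(1)}\|_1\mathbf{1}_{d_1}$ and, for $2\leq j\leq J$,
\begin{equation*}
b^{(j)}=\Bigl(\Pi_{p=1}^{j-1}\|w^{(p)}\|_1\Bigr)T^{(j)}\mathbf{1}_{d_{j-1}}-\Bigl(\Pi_{p=1}^{j}\|w^{(p)}\|_1\Bigr)\mathbf{1}_{d_{j-1}+S},
\end{equation*}
chosen so that the pre-activations stay nonnegative and the ReLU acts as the identity, with the last-layer entries equal to $\Pi_{p=1}^{J}\|w^{(p)}\|_1+t_i$ (only here do the grid points $t_i$ enter). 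Once these formulas are in hand the bound is immediate: each filter is supported on $S+1$ entries of size at most $B$ (by \lemref{lemma:upperboundforbias}), so $\|w^{(p)}\|_1\leq(S+1)B$ and $\|T^{(j)}\mathbf{1}_{d_{j-1}}\|_\infty\leq\|w^{(j)}\|_1$, giving $\|b^{(j)}\|_\infty\leq((S+1)B)^{j-1}\,(S+1)B+((S+1)B)^{j}=2((S+1)B)^{j}$, and $\|b^{(J+1)}\|_\infty\leq((S+1)B)^{J}+2\leq2((S+1)B)^{J+1}$. Your proposal would arrive at the same numerology, but without the explicit form of $b^{(j)}$ the key half of the lemma is asserted rather than proved, and the structural picture you give of those vectors is incorrect.
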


After bounding all the filters, bias vectors and output functions in each layer, we can derive a bound for covering number of our hypothesis space $\mathcal{H}$ (\ref{hypothesisspace}) as stated in the lemma below. The covering number $\mathcal{N} \left(\eta, \HH\right)$ of a subset $\mathcal{H}$ of $C(\mathcal{X})$ is defined for $\eta>0$ to be the smallest integer $l$ such that $\HH$ is contained in the union of $l$ balls in $C(\mathcal{X})$ of radius $\eta$. The notation of covering number can also be extended to $\mathcal{N} \left(\eta, \HH,\star\right)$ where $\star$ denote some specific metric. $C(\mathcal{X})$ denotes the space of continuous functions on $\mathcal{X}$.

\begin{lemma}\label{lemma:coveringnumberbound}
For $N \in \mathbb{N}$ and $\mathcal{H}$ given in (\ref{hypothesisspace}), with two constants $C_{S,d,m,B}$ and $C''_{S,d,m,B}$ depending on $S,d,m,B$, there holds
$$
\log \mathcal{N} \left(\delta, \mathcal{H}\right)
\leq C_{S,d,m,B}N \log\frac{1}{\delta} + C''_{S,d,m,B}N \log N,
$$
for any $0<\delta \leq 1$.
\end{lemma}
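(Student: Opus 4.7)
The plan is to build an explicit $\delta$-cover of $\mathcal{H}$ by uniformly discretizing every free parameter and then counting grid points, using a Lipschitz-type estimate to translate parameter resolution into function-space resolution. First I would enumerate the free parameters and their ranges: the filter entries $w^{(j)}\in[-B,B]^{S+1}$ for $j=1,\dots,J$, the distinct bias entries (at most $2S+1$ per convolutional layer because of \eqref{restrrow} and at most $m(2N+3)$ in the last layer) bounded by $2((S+1)B)^{J+1}$ via \lemref{lemma:biasandoutput}, and the coefficient vector $c\in[-NB,NB]^{m(2N+3)}$. Since $J=\lceil(md-1)/(S-1)\rceil$ depends only on $S,d,m$, the total parameter count is $O(mN)+O(md)$, so for $N$ large it behaves like $C_{S,d,m}N$.

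Next I would establish a perturbation estimate. Let $\tilde{h}^{(j)}$ be the hidden layers obtained from parameters within $\epsilon$ of the original in $\ell^\infty$, and put $\Delta^{(j)}:=\sup_{x\in\mathcal{X}}\|h^{(j)}(x)-\tilde{h}^{(j)}(x)\|_\infty$. Using 1-Lipschitzness of ReLU, the $(S+1)$-sparsity of each row of $T^{(j)}$ giving $\|T^{(j)}\|_{\infty\to\infty}\le(S+1)B$, and the magnitude bound \eqref{outputbound}, I obtain a recursion of the form
$$\Delta^{(j)}\le (S+1)B\,\Delta^{(j-1)}+(S+1)(2j-1)((S+1)B)^{j-1}\epsilon+\epsilon,$$
whose unrolling yields $\Delta^{(J+1)}\le C_1 J^2((S+1)B)^{J+1}\epsilon$. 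For the scalar output I would use
$$|c\cdot h^{(J+1)}-\tilde{c}\cdot\tilde{h}^{(J+1)}|\le \|c\|_1\Delta^{(J+1)}+\|c-\tilde{c}\|_1\|\tilde{h}^{(J+1)}\|_\infty,$$
with $\|c\|_1\le m(2N+3)NB$ and $\|\tilde{h}^{(J+1)}\|_\infty\le(2J+3)((S+1)B)^{J+1}$, to conclude that an $\epsilon$-perturbation of all parameters produces sup-norm output change at most $C_2 N\epsilon$, where $C_2=C_2(S,d,m,B)$.

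Setting $\epsilon:=\delta/(C_2 N)$ therefore turns any grid of spacing $\epsilon$ on the parameter space into a $\delta$-net for $\mathcal{H}$. The number of grid points per parameter is bounded by $\lceil 2R_p/\epsilon\rceil$ with range $R_p\in\{B,2((S+1)B)^{J+1},NB\}$, so that
$$\log\mathcal{N}(\delta,\mathcal{H})\le \bigl(\text{\# parameters}\bigr)\,\log\!\bigl(C_3(S,d,m,B)\,N/\delta\bigr).$$
Since the parameter count is bounded by $C_{S,d,m}N$ (for $N\ge 1$ up to an additive $O(md)$ absorbed into the constant), splitting the logarithm into $\log(1/\delta)$ and $\log N$ pieces produces exactly the claimed bound $C_{S,d,m}N\log(1/\delta)+C''_{S,d,m,B}N\log N$.

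The main obstacle is the perturbation recursion: I must track carefully that the $(S+1)B$ per-layer amplification, combined with an additive term proportional to $\|h^{(j-1)}\|_\infty$, does not blow up the Lipschitz constant beyond a quantity that is polynomial in $N$ with $S,d,m,B$-dependent coefficients. Because $J$ is determined by $S,d,m$, all $J$-dependent factors can be absorbed into $C_{S,d,m}$ and $C''_{S,d,m,B}$; the linear-in-$N$ scaling of the final Lipschitz constant is what makes the leading term in the covering number grow only like $N\log(1/\delta)$ rather than $N\log(NB/\delta)$ directly.
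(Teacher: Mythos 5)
Your proposal follows the same strategy as the paper: bound the output's Lipschitz dependence on the parameters by unrolling the per-layer recursion $\Delta^{(j)} \leq (S+1)B\,\Delta^{(j-1)} + O\big(j\,((S+1)B)^{j}\big)\epsilon$, then count grid points in parameter space, noting that the parameter count is $O_{S,d,m}(N)$ and the parameter ranges are $O_{S,d,m,B}(N)$. One small slip: since $\|c\|_1 \leq m(2N+3)NB = O(N^2)$, the output perturbation is $O(N^2)\epsilon$, not $O(N)\epsilon$ as you state, but because this factor enters only through a logarithm the extra power of $N$ is harmlessly absorbed into the $N\log N$ term, so the final bound is unaffected.
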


Constants $C_{S,d,m}$ and $C''_{S,d,m,B}$ depend on $m$, $S$ and $d$ at most in a cubic way. Since we treat $m$, $S$ and $d$ as fixed constants in our setting, the covering number is affected by $N$ in a linear way which is the order of free parameters in the last layer. The above lemma shows that the hypothesis space $\mathcal{H}$ has a relatively small covering number. Consequently, we are able to derive optimal convergence rates for convolutional neural networks.

\subsection{Oracle inequality for empirical risk minimization}

With the bound of the covering number of the hypothesis space, we are able to prove the oracle inequality which leads to the estimation error bound combining with approximation error. The following theorem presents the oracle inequality for empirical risk minimizers based on covering number estimates.
\begin{theorem}\label{theorem:coveringnumber}
Suppose that $|y| \leq M$ almost everywhere and there exist constants $C_1$, $C_2>0$ and some real numbers $n_1$, $n_2>0$, such that
\begin{equation}\label{coveringnumbercondition}
\log \mathcal{N}\left( \delta, \mathcal{H}\right) \leq C_1n_1 \log \frac{1}{\delta} +  C_2n_2 \log {n_2}, ~~\forall \delta>0.
\end{equation}
Then for any $h\in \mathcal{H}$ and $\delta>0$, we have
$$
 \left\| \pi_{M}f_{D,\mathcal{H}} - f_\rho\right\|^2_{\rho_{\mathcal{X}}} \leq \delta + 2 \left\|h - f_\rho\right\|_{\rho_{\mathcal{X}}}^2,
$$
holds with probability at least $1-\exp \left\{C_1n_1 \log \frac{16M}{ \delta} - C_2n_2\log{n_2} - \frac{3n \delta}{512M^2}\right\}
-
\exp \left\{\frac{-3n \delta^2}{16\left(3M + \left\|h\right\|_\infty\right)^2 \left(6\left\|h -f_{\rho}\right\|_{\rho_\mathcal{X}}^2 + \delta\right)}\right\}.$
\end{theorem}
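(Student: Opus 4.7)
My plan is to use the standard two-stage decomposition of the excess risk: a uniform concentration piece (handled via a covering argument combined with a relative Bernstein bound) and a single-function Bernstein piece for the benchmark $h$. Writing $\mathcal{E}(f)-\mathcal{E}(f_\rho)=\|f-f_\rho\|^2_{\rho_\mathcal{X}}$, I would split
\begin{align*}
\left\|\pi_M f_{D,\mathcal{H}} - f_\rho\right\|^2_{\rho_\mathcal{X}}
&= \underbrace{\bigl\{[\mathcal{E}(\pi_M f_{D,\mathcal{H}})-\mathcal{E}(f_\rho)] - 2[\mathcal{E}_D(\pi_M f_{D,\mathcal{H}})-\mathcal{E}_D(f_\rho)]\bigr\}}_{=:\,T_1} \\
&\quad + 2\bigl[\mathcal{E}_D(\pi_M f_{D,\mathcal{H}})-\mathcal{E}_D(f_\rho)\bigr].
\end{align*}
Because $|y|\leq M$ almost surely, clipping never increases empirical error, so $\mathcal{E}_D(\pi_M f_{D,\mathcal{H}})\leq \mathcal{E}_D(f_{D,\mathcal{H}})\leq \mathcal{E}_D(h)$ by definition of the ERM, reducing the second piece to the i.i.d.\ sum $2[\mathcal{E}_D(h)-\mathcal{E}_D(f_\rho)]$ centered at $2\|h-f_\rho\|^2_{\rho_\mathcal{X}}$.

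For this benchmark piece I would apply Bernstein's inequality to $\xi_i=(h(x_i)-y_i)^2-(f_\rho(x_i)-y_i)^2$, using the bounds $|\xi_i|\leq(3M+\|h\|_\infty)^2$ and $\mathrm{Var}(\xi_i)\leq(3M+\|h\|_\infty)^2\|h-f_\rho\|^2_{\rho_\mathcal{X}}$. Choosing the deviation level $\delta/2$ with the $\delta$-offset trick in the variance absorbs $6\|h-f_\rho\|^2_{\rho_\mathcal{X}}+\delta$ into the denominator and gives $2[\mathcal{E}_D(h)-\mathcal{E}_D(f_\rho)]\leq 2\|h-f_\rho\|^2_{\rho_\mathcal{X}}+\delta/2$ on the complement of the second exponential event in the statement.

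The real work is controlling $T_1$. For a fixed $f\in\pi_M\mathcal{H}$, set $\zeta_f(x,y)=\mathcal{E}(f)-\mathcal{E}(f_\rho)-[(f(x)-y)^2-(f_\rho(x)-y)^2]$; since $|f|,|f_\rho|\leq M$, one checks $|\zeta_f|\leq 8M^2$ and $\mathrm{Var}(\zeta_f)\leq 16M^2(\mathcal{E}(f)-\mathcal{E}(f_\rho))$. The key step is a relative Bernstein bound: for each fixed $f$ and every $\epsilon>0$,
$$\mathrm{Pr}\Bigl([\mathcal{E}(f)-\mathcal{E}(f_\rho)]-2[\mathcal{E}_D(f)-\mathcal{E}_D(f_\rho)]\geq \epsilon\Bigr)\leq \exp\bigl\{-3n\epsilon/(512M^2)\bigr\},$$
where the factor $3/512$ comes from combining $1/(16M^2\cdot 4)$ with the absorption of $2\|f-f_\rho\|^2_{\rho_\mathcal{X}}$ on the left. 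To promote this pointwise estimate to a supremum, I would take an $\eta$-net of $\mathcal{H}$ in sup-norm with $\eta=\delta/(16M)$: any $f$ is within $\eta$ of some cover element $f^\ast$, so the two values of $T_1$ differ by at most $4M\eta\cdot 2=\delta/2$. A union bound over the $\mathcal{N}(\eta,\mathcal{H})$ cover points, combined with hypothesis (\ref{coveringnumbercondition}) applied at $\eta=\delta/(16M)$, yields the first exponential probability in the statement.

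Finally, on the intersection of the two favorable events one has $T_1\leq \delta/2$ and $2[\mathcal{E}_D(h)-\mathcal{E}_D(f_\rho)]\leq 2\|h-f_\rho\|^2_{\rho_\mathcal{X}}+\delta/2$, so the sum is bounded by $\delta+2\|h-f_\rho\|^2_{\rho_\mathcal{X}}$, which is exactly the oracle inequality. I expect the main obstacle to be the relative Bernstein step in Step 3: carefully tracking the multiplicative constants so that the $16M^2\|f-f_\rho\|^2_{\rho_\mathcal{X}}$ variance is absorbed on the left-hand side and the clean rate $3n\delta/(512M^2)$ appears in the exponent. The covering-plus-union-bound argument and the fixed-$h$ Bernstein step are then routine.
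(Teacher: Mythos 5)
Your proposal is correct and follows essentially the same route as the paper. Both proofs rest on (i) the ERM inequality $\mathcal{E}_D(\pi_M f_{D,\mathcal{H}}) \le \mathcal{E}_D(h)$, (ii) a one-sided Bernstein bound for the fixed random variable $(y-h(x))^2-(y-f_\rho(x))^2$, and (iii) a uniform relative-Bernstein-plus-covering bound for the clipped class, converting an $L^\infty$ cover of $\mathcal{H}$ to a cover of $\bigl\{(\pi_M f(x)-y)^2-(f_\rho(x)-y)^2\bigr\}$ via the $4M$-Lipschitz estimate. The paper's decomposition $\mathcal{D}(\mathcal{H})+S_1(n,\mathcal{H})+S_2(n,\mathcal{H})$ is an algebraic rearrangement of your $T_1+2[\mathcal{E}_D(\pi_M f_{D,\mathcal{H}})-\mathcal{E}_D(f_\rho)]$, and the paper's step $S_2\le\tfrac12\bigl(\mathcal{E}(\pi_M f_{D,\mathcal{H}})-\mathcal{E}(f_\rho)\bigr)+\epsilon$ is exactly your $T_1\le 2\epsilon$. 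The only structural difference is that the paper invokes the prepackaged ratio inequality of Lemma~\ref{lemma:Hoeffding2} (Zhou, 2006), whereas you plan to reprove that content inline from a pointwise relative Bernstein inequality plus a net and a union bound; that is a perfectly sound substitute.

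One small quantitative slip: because $T_1=[\mathcal{E}-\mathcal{E}(f_\rho)]-2[\mathcal{E}_D-\mathcal{E}_D(f_\rho)]$, moving $f$ by $\eta$ in sup-norm perturbs $T_1$ by at most $4M\eta+2\cdot 4M\eta=12M\eta$, not $8M\eta$ as you wrote, so to keep the perturbation below $\delta/2$ you would need $\eta=\delta/(24M)$, which changes the $16M$ inside the logarithm to $24M$. This affects only a constant; the paper gets $16M=4\cdot 4M$ because Lemma~\ref{lemma:Hoeffding2} performs the cover-and-perturb step internally at radius $\epsilon$, and the final $16M$ arises from the substitution $\delta=4\epsilon$ together with $\mathcal{N}(\epsilon,\mathcal{G})\le\mathcal{N}\bigl(\epsilon/(4M),\mathcal{H}\bigr)$.
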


\subsection{Mean squared error}

By applying \textbf{Theorem \ref{approximationerror}} and \textbf{Theorem \ref{theorem:coveringnumber}} we can obtain our main result on the upper bound of mean squared error.

\begin{theorem}\label{main result: regression}
Let $2 \leq S \leq d$, $0<\alpha\leq1$ and $\mathcal{H}$, $\Theta$ be defined as (\ref{hypothesisspace}) and (\ref{functionspace}). If $|y| \leq M$ almost everywhere and $f_\rho \in \Theta$, then for $N \in \mathbb{N}$, we have
$$
\mathbb{E} \left\{  \left\|\pi_{M}f_{D,\mathcal{H}} - f_\rho\right\|_{\rho_{\mathcal{X}}}^2\right\} \leq C \max\left\{N^{-2\alpha}, \frac{N \log{N}}{n}\right\},
$$
where the constant $C=C_{S,d,m,M,\alpha,B}$ is independent of the sample size $n$ and the $N$.
In particular, if we choose $N = \left\lceil n^{\frac{1}{1+2\alpha}} \right\rceil$, then we can get
$$
\mathbb{E} \left\{  \left\|\pi_M f_{D,\mathcal{H}} - f_\rho\right\|_{\rho_{\mathcal{X}}}^2\right\} \leq C n^{\frac{-2\alpha}{1+2\alpha}} \log n.
$$
\end{theorem}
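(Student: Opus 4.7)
The plan is to combine the three prerequisite results, namely the approximation bound in Theorem~\ref{approximationerror}, the covering number estimate in Lemma~\ref{lemma:coveringnumberbound}, and the oracle inequality in Theorem~\ref{theorem:coveringnumber}, and then integrate the resulting tail bound to pass from a high probability statement to an expectation statement.

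First I would produce a good ``reference'' hypothesis. Since $f_\rho \in \Theta$, Theorem~\ref{approximationerror} supplies $h \in \mathcal{H}$ of the required form such that
\[
\|h - f_\rho\|_\infty \leq \sum_{j=1}^m |g_j|_{W^\alpha_\infty} N^{-\alpha} \leq mL\, N^{-\alpha},
\]
and consequently $\|h - f_\rho\|_{\rho_\XX}^2 \leq (mL)^2 N^{-2\alpha}$. The norm $\|h\|_\infty$ is bounded by the output bound (\ref{outputbound}) in Lemma~\ref{lemma:biasandoutput}, so the quantity $3M + \|h\|_\infty$ that appears in the oracle inequality is controlled by a constant depending only on $S, d, m, M, B$ (crucially, independent of $N$ and $n$ up to polynomial factors that get absorbed).

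Second, I would feed the complexity bound of Lemma~\ref{lemma:coveringnumberbound} into Theorem~\ref{theorem:coveringnumber} with $n_1 = n_2 = N$ and constants $C_1 = C_{S,d,m}$, $C_2 = C''_{S,d,m,B}$. The theorem then yields, with probability at least $1 - p_1(\delta) - p_2(\delta)$,
\[
\|\pi_M f_{D,\mathcal{H}} - f_\rho\|_{\rho_\XX}^2 \leq \delta + 2\|h - f_\rho\|_{\rho_\XX}^2 \leq \delta + 2(mL)^2 N^{-2\alpha}.
\]
The job is now to choose $\delta$ so that both failure probabilities decay. For $p_1$, the dominant constraint is $3n\delta/(512M^2) \geq 2C_1 N \log(16M/\delta) + 2C_2 N \log N$, which forces $\delta \gtrsim N\log N / n$ (with a logarithmic factor in $\delta$ that is easily absorbed by taking a slightly larger constant). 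For $p_2$, one needs $\delta^2 \gtrsim (\|h-f_\rho\|_{\rho_\XX}^2 + \delta) \cdot n^{-1}\log(\text{stuff})$; since $\|h - f_\rho\|_{\rho_\XX}^2 \lesssim N^{-2\alpha}$, choosing $\delta$ of order $\max\{N^{-2\alpha}, N\log N/n\}$ times a sufficiently large constant $C^*$ makes the exponent in $p_2$ of order $-n\delta^2/(N^{-2\alpha} + \delta) \lesssim -n\delta$, which is controlled.

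Third, to turn this into an expectation bound I would set $\delta^* = C^* \max\{N^{-2\alpha},\, N\log N / n\}$ with $C^*$ large enough and use the layer-cake formula
\[
\mathbb{E}\bigl[\|\pi_M f_{D,\mathcal{H}} - f_\rho\|_{\rho_\XX}^2\bigr] \leq \delta^* + 2(mL)^2 N^{-2\alpha} + \int_{\delta^*}^{\infty} \bigl(p_1(t) + p_2(t)\bigr)\, dt.
\]
For $t \geq \delta^*$ both probabilities decay at least like $\exp(-c n t)$, and since $n \gg 1$ the residual integral is $O(1/n)$, which is of smaller order than $\delta^*$. This yields the stated bound $C \max\{N^{-2\alpha}, N\log N / n\}$. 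Plugging in $N = \lceil n^{1/(1+2\alpha)} \rceil$ balances the two quantities to give the final rate $n^{-2\alpha/(1+2\alpha)} \log n$.

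The main obstacle I expect is the delicate bookkeeping in choosing $C^*$ large enough so that the $\log(16M/\delta)$ term inside $p_1$, the $\log N$ term, and the $\|h\|_\infty$ dependence inside $p_2$ (through Lemma~\ref{lemma:biasandoutput}, which grows in $N$ only through the fixed constants $S,B,J$) are all absorbed uniformly in $N$ and $n$. Apart from this, the argument is a standard application of the oracle inequality once the approximation and complexity ingredients are in place; the novelty of the rate comes entirely from the tight covering number bound established in Lemma~\ref{lemma:coveringnumberbound}, which makes $n_1, n_2$ linear in $N$ rather than in the total number of parameters of a generic network.
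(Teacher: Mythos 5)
Your high-level plan matches the paper's proof — apply Theorem~\ref{approximationerror} to obtain a reference $h$, feed the covering number bound into Theorem~\ref{theorem:coveringnumber}, choose $\delta$ on the order of $\max\{N^{-2\alpha}, N\log N/n\}$, and integrate the tail — but there is one genuine error in the way you control $\|h\|_\infty$, and it actually breaks the argument as written.

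You bound $\|h\|_\infty$ via the layerwise output bound (\ref{outputbound}). That lemma controls $\|h^{(J+1)}(x)\|_\infty$ by a constant independent of $N$, but the reference function is $h = c\cdot h^{(J+1)}$ with $\|c\|_\infty \leq NB$ and $c \in \RR^{m(2N+3)}$, so the bound you would actually obtain for $\|h\|_\infty$ grows like $N^2$. This is not a harmless polynomial factor "that gets absorbed." The quantity $\left(3M + \|h\|_\infty\right)^2$ sits in the \emph{denominator} of the exponent of the second failure probability in Theorem~\ref{theorem:coveringnumber}: with $\delta \gtrsim \|h-f_\rho\|_{\rho_\XX}^2$, that exponent is of order $-n\delta/(3M+\|h\|_\infty)^2$. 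If $\|h\|_\infty \sim N^2$, making this decay would force $\delta \gtrsim N^4/n$, which is far larger than the $N\log N/n$ you need, and the final rate would be destroyed (it would not balance at $n^{-2\alpha/(1+2\alpha)}$).

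The fix — and the step the paper actually uses — is to bound $\|h\|_\infty$ from the approximation error rather than from the network construction: since $\|h - f_\rho\|_\infty \leq C_{\alpha,m}N^{-\alpha} \leq C_{\alpha,m}$ and $\|f_\rho\|_\infty \leq M$, one gets $\|h\|_\infty \leq M + C_{\alpha,m}$, a constant entirely free of $N$ and $n$. With this in place the rest of your argument (the two constraints on $\delta$ coming from $p_1$ and $p_2$, the layer-cake integration producing an extra $O(1/n)$ term that is dominated by $\delta^*$, and the final balancing with $N = \lceil n^{1/(1+2\alpha)}\rceil$) goes through exactly as in the paper.
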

\begin{remark}
The assumption of $|y| \leq M$ excludes the case of Gaussian noise and can be avoided by applying the oracle inequality Lemma 4 in \cite{schmidt2020nonparametric}. However, the above mentioned Lemma requires that the output functions of the neural network model are bounded by some constant $F$. This constant $F$ is usually related to the model complexity and thus may affect convergence rate if stated explicitly. 
\end{remark}
\begin{proof}[Proof of Theorem \ref{main result: regression}]
We know from \textbf{Theorem \ref{approximationerror}} that there exists some $h\in \mathcal{H}$ such that $\left\|h - f_\rho\right\|_{\rho_\XX} \leq \left\|h - f_\rho\right\|_\infty \leq C_{\alpha,m} N^{-\alpha}$, where $C_{\alpha,m} = \sum_{j=1}^m |g_j|_{W^\alpha_\infty}$. We further know that $\left\|h\right\|_\infty \leq M + C_{\alpha,m}.$
By applying \textbf{Theorem \ref{theorem:coveringnumber}}, we have
$$
\mathcal{E}\left(\pi_M f_{D,\mathcal{H}}\right) - \mathcal{E} \left(f_\rho\right)  \leq 2\left\|h - f_\rho\right\|^2_{\rho_{\mathcal{X}}} + \epsilon,
$$
holds with probability at least
$1 - \exp\left\{C_1 N \log \frac{16M}{\epsilon} + C_2 N \log {N} -\frac{3n\epsilon}{512M^2}\right\}- \exp \left\{\frac{-3n \epsilon^2}{16\left(4M + C_{\alpha,m}\right)^2 \left(6C_{\alpha,m}^2N^{-2\alpha} + \epsilon\right)}\right\}$ where $C_1=C_{S,d,m,B}$ and $C_2 = C''_{S,d,m}$.

If we let $$\epsilon \geq 6C^2_{\alpha,m} N^{-2\alpha},$$ then we have
$$
\mathcal{E}\left(\pi_M f_{D,\mathcal{H}}\right) - \mathcal{E} \left(f_\rho\right) \leq 2\epsilon,
$$
hold with probability at least
$ 1 - \exp\left\{C_3N \log {N} -\frac{3n\epsilon}{512 M^2}\right\}- \exp \left\{- \frac{3n \epsilon}{32 \left( 4M +C_{\alpha,m} \right)^2 }\right\}$, where $C_3 =(C_1\log\frac{8M}{3C^2_{\alpha,m}}+2\alpha C_1 + C_2 ).$

We further let
$$\epsilon \geq \frac{1024C_3M^2N \log {N}}{3n},$$ then we have
$$
\mathcal{E}\left(\pi_M f_{D,\mathcal{H}}\right) - \mathcal{E} \left(f_\rho\right) \leq 2\epsilon,
$$
holds with probability at least $1 - \exp\left\{ -\frac{3n\epsilon}{1024M^2}\right\}- \exp \left\{- \frac{3n\epsilon}{32 \left( 4M + C_{\alpha,m} \right)^2 }\right\}.$
By taking
$$C_4 = \max \left\{12C^2_{\alpha,m},\frac{2048}{3}M^2C_3,\frac{64}{3}\left(4M+C_{\alpha,m}\right)^2\right\},$$ and $ \tilde{\epsilon} = 2 \epsilon$, we have
$$
\mathbb{P} \left\{\mathcal{E}\left(\pi_M f_{D,\mathcal{H}}\right) - \mathcal{E} \left(f_\rho\right) \leq \tilde\epsilon \right\}
\geq 1 - 2\exp\left\{ -\frac{n\tilde\epsilon}{C_4}\right\},
$$
for any $\tilde \epsilon \geq C_4 \max \left\{N^{-2\alpha}, \frac{N \log N}{n}\right\}$.
Then by letting $\delta = 2\exp\left\{ -\frac{n\tilde\epsilon}{C_4}\right\}$, we know that
$$
\mathcal{E}\left(\pi_M f_{D,\mathcal{H}}\right) - \mathcal{E} \left(f_\rho\right) \leq C_9 \max\left\{\frac{N \log N}{n},N^{-2\alpha},
\frac{\log{\frac{2}{\delta}}}{n}
\right\},
$$
holds with probability at least $1 - \delta$.

Now we apply $E \left\{\xi\right\} = \int_0^\infty \mathbb{P} \left\{\xi \geq t\right\} dt$ with $\xi = \mathcal{E}\left(\pi_Mf_{D,\mathcal{H}}\right) - \mathcal{E} \left(f_\rho\right) = \left\|\pi_Mf_{D,\mathcal{H}} - f_\rho\right\|^2_{\rho_\XX}$. We have
$$
\mathbb{E} \left[\left\|\pi_M f_{D,\mathcal{H}} - f_\rho\right\|^2_{\rho_\XX}\right]
\leq
\int_0^{T} 1 dt + \int_T^\infty 2\exp\left\{\frac{-nt}{C_4}\right\}dt \\
\leq T +\frac{2C_4}{n} \leq 3T,
$$
with $T = C_4 \max \left\{N^{-2\alpha}, \frac{N \log N}{n}\right\}.$
This finishes the proof with the constant $C_{S,d,m,M,\alpha,B}$ independent of $n$ or $N$ given by
$$
C_{S,d,m,M,\alpha,B} \\
= 3\max \left\{12C^2_{\alpha,m},\frac{2048}{3}M^2C_3,\frac{64}{3}\left(4M+C_{\alpha,m}\right)^2\right\},
$$

where $C_3 =(C_1\log\frac{8M}{3C^2_{\alpha,m}}+2\alpha C_1 + C_2 )$, $C_1 = C''_{S,d,m,B}$ and $C_2= C'_{S,d,m}$.
\end{proof}

\subsection{Lower bound for additive ridge functions}
In this subsection, we will present the mini-max lower rate for estimating additive ridge functions. We let $\mathcal{M}(\rho, \Theta)$ be the class of all Borel measures $\rho$ on $\mathcal{X} \times \mathcal{Y}$ such that $f_\rho \in \Theta$. This class $\mathcal{M}(\rho, \Theta)$ is related to the set of distributions $\rho$ where data $\left\{\textbf{x}_i,y_i\right\}_{i=1}^n$ is drawn from and $\Theta$ representing the set of target functions.
Now we state our mini-max lower bound for the class $\mathcal{M}(\rho, \Theta)$.

\begin{theorem}\label{lowerbound1}
Assume $m \geq 1$, $G>0$ and $M\geq 4mG$. Let $\hat{f}_n(x)$ be the output of any learning algorithm based on the sample $\left\{\textbf{x}_i,y_i\right\}_{i=1}^n$, then we have

$$
\inf_{\hat{f}_n} \sup_{\rho \in\mathcal{M}(\rho,\Theta)} \mathbb{E} \left\|\hat{f}_n(x) - f_\rho(x)\right\|^2_{L^2_{\rho_{\mathcal{X}}}} \geq c_{m,G} n^{-\frac{2\alpha}{2\alpha +1}},
$$

\end{theorem}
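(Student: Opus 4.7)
The plan is to reduce to the one-dimensional Lipschitz-$\alpha$ regression problem and then apply a standard hypercube lower bound, since the target rate $n^{-2\alpha/(2\alpha+1)}$ is precisely Stone's classical minimax rate for estimating a univariate Lipschitz-$\alpha$ function. Because $\Theta(m,\alpha,G,L) \supset \Theta(1,\alpha,G,L)$ (set $g_j\equiv 0$ for $j\geq 2$), it suffices to lower-bound the risk on $\{f(x)=g(x_1):g\in W^\alpha_\infty([-1,1]),\ \|g\|_\infty\leq G,\ |g|_{W^\alpha_\infty}\leq L\}$. I would fix $\xi_1=e_1$, take $\rho_\XX$ to be the uniform distribution on $\XX$ so that the marginal of the first coordinate has a density bounded above and below on $[-1,1]$, and impose a conditional law for $y\mid x$ that is, say, uniform on $[f_\rho(x)-3mG,\,f_\rho(x)+3mG]$. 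This is compatible with the a.s.\ bound $|y|\leq M$ exactly because of the hypothesis $M\geq 4mG$.

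For the packing I would use the standard bump construction. Let $\phi\in C^\infty(\RR)$ be nonnegative, supported on $[-1/2,1/2]$, with $\phi(0)=1$ and $|\phi|_{W^\alpha_\infty}\leq 1$. For an integer $K$ (to be chosen later) set $t_j = -1+(2j-1)/K$ for $j=1,\ldots,K$, and for each $\omega\in\{0,1\}^K$ define
$$g_\omega(t) \;=\; c_0\,K^{-\alpha}\sum_{j=1}^K \omega_j\,\phi\bigl(K(t-t_j)\bigr),$$
where $c_0=c_0(\phi,G,L)$ is a small constant ensuring $g_\omega\in\Theta(1,\alpha,G,L)$ for every $\omega$. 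Disjointness of the bump supports and the density bound on the first coordinate give
$$\|g_\omega-g_{\omega'}\|_{\rho_\XX}^2 \;\geq\; c_1\,K^{-2\alpha-1}\,\rho_H(\omega,\omega'),$$
where $\rho_H$ is Hamming distance and $c_1>0$ depends only on $\phi$ and the marginal density.

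I would conclude by Assouad's lemma (equivalently, Fano with Varshamov--Gilbert). For any pair with $\rho_H(\omega,\omega')=1$, the KL between the $n$-fold product measures satisfies $\mathrm{KL}(P_{g_\omega}^n,P_{g_{\omega'}}^n)\leq C\,n\,\|g_\omega-g_{\omega'}\|_{\rho_\XX}^2\leq C'\,n\,K^{-2\alpha-1}$, so Pinsker keeps the pairwise total variation below a small absolute constant once we choose $K=\lceil c\,n^{1/(2\alpha+1)}\rceil$. Assouad then delivers
$$\inf_{\hat f_n}\sup_{\omega}\,\EE\|\hat f_n-g_\omega\|_{\rho_\XX}^2 \;\geq\; c_2\,K\cdot K^{-2\alpha-1} \;=\; c_2\,K^{-2\alpha} \;\asymp\; n^{-2\alpha/(2\alpha+1)},$$
which gives the stated lower bound with $c_{m,G}$ absorbing the constants coming from $\phi$, $G$, $L$, and the marginal density.

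The main obstacle is not the information-theoretic step, which is classical, but the calibration of constants so that every $g_\omega$ actually lies in $\Theta$, the induced $y$ respects $|y|\leq M$, and the KL comparison uses a noise model rich enough for Pinsker but still sub-Gaussian enough to give the $n\|g_\omega-g_{\omega'}\|^2$ bound. This is precisely why the hypothesis $M\geq 4mG$ appears in the statement: it gives the slack between the supremum of admissible regression functions and the a.s.\ bound on $y$ needed to embed a nontrivial bounded-noise experiment. Everything downstream is routine, and the dimension independence of the rate is automatic since the reduction used only the one-dimensional direction $\xi_1=e_1$.
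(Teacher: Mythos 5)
Your overall skeleton --- reduce to the single direction $\xi_1=e_1$ with $g_j\equiv 0$ for $j\ge 2$, build a hypercube of disjoint bumps of height $K^{-\alpha}$ at scale $1/K$, separate the hypotheses via Varshamov--Gilbert/Assouad, and choose $K\asymp n^{1/(2\alpha+1)}$ --- is exactly the paper's. The genuine gap is your noise model. With $y\mid x$ uniform on $\left[f_\rho(x)-3mG,\,f_\rho(x)+3mG\right]$, two hypotheses $g_\omega\neq g_{\omega'}$ produce conditional laws with different supports, so they are not mutually absolutely continuous and $\mathrm{KL}\bigl(P_{g_\omega}^n,P_{g_{\omega'}}^n\bigr)=+\infty$; the inequality $\mathrm{KL}\le C\,n\,\|g_\omega-g_{\omega'}\|_{\rho_\XX}^2$ that your argument hinges on is simply false here, and Pinsker gives nothing. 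You also cannot rescue the uniform model by switching to Hellinger or total variation: for shifted uniforms the squared Hellinger distance scales like the \emph{first} power of the shift (roughly $\|g_\omega-g_{\omega'}\|_{L^1}$), not like the squared $L^2$ distance, and running Assouad with that bound forces $K\gtrsim n^{1/(\alpha+1)}$ and yields only the rate $n^{-2\alpha/(\alpha+1)}$, strictly worse than the claimed $n^{-2\alpha/(2\alpha+1)}$. What is needed is a conditional law whose likelihood ratio between nearby regression functions is bounded and close to one. The paper achieves this with a two-point conditional distribution: $y=\pm T$ with probabilities $\frac{T\pm f(x)}{2T}$ and $T=4mG$, so that $|f(x)|\le mG$ keeps these probabilities bounded away from $0$ and $1$, the constraint $|y|\le M$ holds because $M\ge 4mG$, and a direct computation gives $\mathrm{KL}\bigl(\rho_{f_i}\,\|\,\rho_{f_j}\bigr)\le \frac{16}{15T^2}\|f_i-f_j\|_{L^2_{\rho_{\mathcal{X}}}}^2$, which is precisely the quadratic KL bound your plan requires. (Gaussian noise would also give the quadratic bound but violates $|y|\le M$ almost surely, which is why the bounded two-point construction is used.)

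A secondary slip: the uniform distribution on the unit ball does \emph{not} have a first-coordinate marginal density bounded below on $[-1,1]$; that density behaves like $(1-x_1^2)^{(d-1)/2}$, vanishes at the endpoints, and would in any case inject dimension-dependent constants into $c_1$, whereas the theorem's constant is $c_{m,G}$. Since the marginal is at your disposal inside the supremum over $\rho\in\mathcal{M}(\rho,\Theta)$, you should instead pick $\rho_\XX$ so that $x_1$ has a density bounded above and below by dimension-free constants (this is what the paper does by assuming $\mu$ is bounded between $\tau_1$ and $\tau_2$ and then fixing numerical values), or restrict the bumps to a subinterval on which your chosen density is bounded below by an absolute constant.
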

where the constant $c_{m,G}$ only depends on $m$ and $G$. For more discussion about this constant, please refer to the proof of \textbf{Theorem 3.9} in Appendix.

Since $\xi \cdot x$ is essentially a polynomial of $x$, we can obtain a mini-max lower bound for the function in the type of $f \circ Q$ as a direct consequence, where $Q(x)$ can be any polynomial of $x$. Formally, we define
\begin{equation}
\begin{aligned}
\Theta' :=&\Theta'(m,\alpha,G,L)\\
:=&\{{f(x) =\sum_{j=1}^m g_j (Q(x)) : g_j \in W^{\alpha}_\infty \left([-1,1]\right),}\\
&{ 0 < \left\|\xi_j\right\| \leq 1, \left\|g_j\right\|_\infty \leq G, \left\|g_j\right\|_{W^\alpha_\infty} \leq L}\},
\end{aligned}
\end{equation}
where $Q(x)$ denote any polynomial of $x$.

\begin{corollary}\label{lowerbound2}
Assume $m \geq 1$, $G>0$ and $M\geq 4mG$. Let $\hat{f}_n(x)$ be the output of any learning algorithm based on the sample $\left\{\textbf{x}_i,y_i\right\}_{i=1}^n$, then we have

$$
\inf_{\hat{f}_n} \sup_{\rho \in\mathcal{M}(\rho,\Theta')} \mathbb{E} \left\|\hat{f}_n(x) - f_\rho(x)\right\|^2_{L^2_{\rho_{\mathcal{X}}}} \geq c_{m,G} n^{-\frac{2\alpha}{2\alpha +1}},
$$
where the constant $c_{m,G}$ only depends on $m$ and $G$.
\end{corollary}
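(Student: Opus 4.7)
The plan is to derive Corollary \ref{lowerbound2} as a direct consequence of Theorem \ref{lowerbound1} via a simple embedding argument. The key observation is that every function in $\Theta$ already has the form appearing in $\Theta'$: the inner product $\xi_j \cdot x$ is nothing but a degree-one polynomial in $x$, so taking $Q(x) := \xi_j \cdot x$ (or, more carefully, allowing a polynomial $Q_j$ for each summand, which includes the linear case) exhibits every element of $\Theta$ as an element of $\Theta'$. Hence $\Theta \subseteq \Theta'$.

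First I would record this inclusion formally, noting that the constraints $g_j \in W^\alpha_\infty([-1,1])$, $\|g_j\|_\infty \leq G$, and $\|g_j\|_{W^\alpha_\infty} \leq L$ are identical in the two definitions, while the only change is that the inner argument of $g_j$ is allowed to be an arbitrary polynomial rather than just a bounded linear form. Since $\|\xi_j\| \leq 1$ and $\|x\| \leq 1$ guarantee $\xi_j \cdot x \in [-1,1]$, the composition is well defined within the required domain, and therefore no boundary issue prevents the embedding $\Theta \hookrightarrow \Theta'$.

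Next I would pass this inclusion to the associated classes of joint distributions. By definition, $\rho \in \mathcal{M}(\rho, \Theta)$ means $f_\rho \in \Theta$; since $\Theta \subseteq \Theta'$, any such $\rho$ also lies in $\mathcal{M}(\rho, \Theta')$. Thus $\mathcal{M}(\rho, \Theta) \subseteq \mathcal{M}(\rho, \Theta')$, and taking suprema over the larger index set can only make the worst-case risk larger:
\begin{equation*}
\inf_{\hat{f}_n}\sup_{\rho \in \mathcal{M}(\rho,\Theta')}\mathbb{E}\|\hat{f}_n - f_\rho\|^2_{L^2_{\rho_\XX}}
\;\geq\;
\inf_{\hat{f}_n}\sup_{\rho \in \mathcal{M}(\rho,\Theta)}\mathbb{E}\|\hat{f}_n - f_\rho\|^2_{L^2_{\rho_\XX}}.
\end{equation*}
Applying Theorem \ref{lowerbound1} to the right-hand side then yields the claimed bound $c_{m,G}\,n^{-2\alpha/(2\alpha+1)}$.

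There is essentially no genuine obstacle here; the only point to be careful about is the domain compatibility (the polynomial argument landing in $[-1,1]$ so that the Lipschitz regularity of $g_j$ is used where it is assumed), and the minor cosmetic issue that the literal definition of $\Theta'$ still mentions the $\xi_j$ constraint even though the inner argument is $Q(x)$. I would simply interpret the definition in the natural way (the polynomial $Q$ replaces $\xi_j \cdot x$, with linear polynomials included as a special case) and invoke the embedding argument above; the proof is then complete in a few lines.
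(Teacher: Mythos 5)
Your argument is correct and is essentially the same as the paper's, which dispatches the corollary in one line by observing that the hard family in the proof of Theorem \ref{lowerbound1} is $f_\omega(x)=\phi_\omega(x_1)$ and that $x_1$ is itself a polynomial of $x$, so the Tsybakov argument runs verbatim inside $\Theta'$. The only difference is packaging: you invoke Theorem \ref{lowerbound1} as a black box together with the inclusion $\Theta\subseteq\Theta'$ and monotonicity of the supremum, whereas the paper re-uses its construction directly. One caution about your inclusion step: as literally written, $\Theta'$ composes all the $g_j$ with a single polynomial $Q$, so for $m\geq 2$ and distinct directions $\xi_j$ a general element of $\Theta$ need not lie in $\Theta'$ unless you adopt your per-summand $Q_j$ reading. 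If you prefer the literal definition, the repair is exactly the paper's route: the extremal family $\left\{f_\omega\right\}$ is single-index, hence lies in $\Theta\cap\Theta'$ with $Q(x)=x_1$ (and $g_j\equiv 0$ for $j\geq 2$), so the supremum over $\mathcal{M}(\rho,\Theta')$ already dominates the supremum over this subfamily, for which the two conditions of Tsybakov's Theorem 2.7 were verified; either way the bound $c_{m,G}\,n^{-\frac{2\alpha}{2\alpha+1}}$ follows.
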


Combining this lower bound with the upper bound in the last section, we know that deep convolutional neural networks followed by one fully connected layer can reach optimal convergence rates for additive ridge functions up to a log factor. In other words, for estimating $f_\rho \in \Theta$, no other methods could achieve a better rate than the estimator by deep convolutional neural networks. Furthermore, due to the special form of ridge functions, we can observe that this rate is dimension independent. Thanks to the simple structure of convolutional neural networks, we are able to derive small complexity bound and reach this rate with the input dimension appearing in the constant.
\section{Conclusion and discussion}
In this paper, we consider the regression problem in statistical learning theory by using deep convolutional neural networks. By a careful analysis of covering number of deep convolutional neural networks, we show that for additive ridge functions, deep convolutional neural networks followed by one fully connected layer can reach optimal convergence rates (up to a log factor). With the simple structure and few free parameters of convolutional neural networks, we obtain suitable complexity bound for the hypothesis space and are able to achieve this dimension independent convergence rate with the input dimension appearing in the constant, which shows the superiority of convolutional structure.

In the future we would like to address in what situation convolutional neural networks outperform usual fully connected neural networks concerning approximation abilities. Or can we derive optimal approximation rate for larger function classes when considering convolutional neural networks. Convolutional structures have been widely applied to various types of neural networks and have performed outstandingly in the classification problems. However, only few papers consider classification problems from a statistical perspective by using fully connected neural networks \citep{hu2020sharp,kim2021fast,bos2021convergence}. Also, there is currently no paper that directly approximates a specific function class (such as Sobolev space) through a two-dimensional convolutional neural network which is critical for the analysis of statistical learning theory.  It would be significant if we can theoretically show the superiority of convolutional structures when considering classification problems with two dimensional convolution in the future.

\section*{Acknowledgements}
The authors would like to thank the referees for their encouraging comments and constructive suggestions. The first author is supported by the National Natural Science Foundation of China [NSFC-72150002]. The second author is supported by the Office of Naval Research [ONR N00014-22-1-2680] and National Science Foundation [NSF--SCALE MoDL (2134209)].

\bibliography{thesisbib_2}

\begin{thebibliography}{44}
\providecommand{\natexlab}[1]{#1}
\providecommand{\url}[1]{\texttt{#1}}
\expandafter\ifx\csname urlstyle\endcsname\relax
  \providecommand{\doi}[1]{doi: #1}\else
  \providecommand{\doi}{doi: \begingroup \urlstyle{rm}\Url}\fi

\bibitem[Alipanahi et~al.(2015)Alipanahi, Delong, Weirauch, and
  Frey]{alipanahi2015predicting}
Babak Alipanahi, Andrew Delong, Matthew~T Weirauch, and Brendan~J Frey.
\newblock Predicting the sequence specificities of dna-and rna-binding proteins
  by deep learning.
\newblock \emph{Nature biotechnology}, 33\penalty0 (8):\penalty0 831--838,
  2015.

\bibitem[Bach(2017)]{bach2017breaking}
Francis Bach.
\newblock Breaking the curse of dimensionality with convex neural networks.
\newblock \emph{The Journal of Machine Learning Research}, 18\penalty0
  (1):\penalty0 629--681, 2017.

\bibitem[Bos \& Schmidt-Hieber(2021)Bos and Schmidt-Hieber]{bos2021convergence}
Thijs Bos and Johannes Schmidt-Hieber.
\newblock Convergence rates of deep relu networks for multiclass
  classification.
\newblock \emph{arXiv preprint arXiv:2108.00969}, 2021.

\bibitem[Chen(1991)]{chen1991estimation}
Hung Chen.
\newblock Estimation of a projection-pursuit type regression model.
\newblock \emph{The Annals of Statistics}, pp.\  142--157, 1991.

\bibitem[Chiou \& M{\"u}ller(2004)Chiou and M{\"u}ller]{chiou2004quasi}
Jeng-Min Chiou and Hans-Georg M{\"u}ller.
\newblock Quasi-likelihood regression with multiple indices and smooth link and
  variance functions.
\newblock \emph{Scandinavian journal of statistics}, 31\penalty0 (3):\penalty0
  367--386, 2004.

\bibitem[Cucker \& Zhou(2007)Cucker and Zhou]{cucker2007learning}
Felipe Cucker and Ding~Xuan Zhou.
\newblock \emph{Learning theory: an approximation theory viewpoint}, volume~24.
\newblock Cambridge University Press, 2007.

\bibitem[d'Ascoli et~al.(2019)d'Ascoli, Sagun, Biroli, and Bruna]{d2019finding}
St{\'e}phane d'Ascoli, Levent Sagun, Giulio Biroli, and Joan Bruna.
\newblock Finding the needle in the haystack with convolutions: on the benefits
  of architectural bias.
\newblock \emph{Advances in Neural Information Processing Systems}, 32, 2019.

\bibitem[Diaconis \& Shahshahani(1984)Diaconis and
  Shahshahani]{diaconis1984nonlinear}
Persi Diaconis and Mehrdad Shahshahani.
\newblock On nonlinear functions of linear combinations.
\newblock \emph{SIAM Journal on Scientific and Statistical Computing},
  5\penalty0 (1):\penalty0 175--191, 1984.

\bibitem[Dosovitskiy et~al.(2020)Dosovitskiy, Beyer, Kolesnikov, Weissenborn,
  Zhai, Unterthiner, Dehghani, Minderer, Heigold, Gelly,
  et~al.]{dosovitskiy2020image}
Alexey Dosovitskiy, Lucas Beyer, Alexander Kolesnikov, Dirk Weissenborn,
  Xiaohua Zhai, Thomas Unterthiner, Mostafa Dehghani, Matthias Minderer, Georg
  Heigold, Sylvain Gelly, et~al.
\newblock An image is worth 16x16 words: Transformers for image recognition at
  scale.
\newblock \emph{arXiv preprint arXiv:2010.11929}, 2020.

\bibitem[Duan \& Li(1991)Duan and Li]{duan1991slicing}
Naihua Duan and Ker-Chau Li.
\newblock Slicing regression: a link-free regression method.
\newblock \emph{The Annals of Statistics}, pp.\  505--530, 1991.

\bibitem[Fang et~al.(2020)Fang, Feng, Huang, and Zhou]{fang2020theory}
Zhiying Fang, Han Feng, Shuo Huang, and Ding-Xuan Zhou.
\newblock Theory of deep convolutional neural networks ii: Spherical analysis.
\newblock \emph{Neural Networks}, 131:\penalty0 154--162, 2020.

\bibitem[Friedman \& Stuetzle(1981)Friedman and
  Stuetzle]{friedman1981projection}
Jerome~H Friedman and Werner Stuetzle.
\newblock Projection pursuit regression.
\newblock \emph{Journal of the American statistical Association}, 76\penalty0
  (376):\penalty0 817--823, 1981.

\bibitem[Goodfellow et~al.(2016)Goodfellow, Bengio, Courville, and
  Bengio]{goodfellow2016deep}
Ian Goodfellow, Yoshua Bengio, Aaron Courville, and Yoshua Bengio.
\newblock \emph{Deep learning}, volume~1.
\newblock MIT press Cambridge, 2016.

\bibitem[Hardle et~al.(1993)Hardle, Hall, and Ichimura]{hardle1993optimal}
Wolfgang Hardle, Peter Hall, and Hidehiko Ichimura.
\newblock Optimal smoothing in single-index models.
\newblock \emph{The annals of Statistics}, pp.\  157--178, 1993.

\bibitem[Hastie \& Tibshirani(2017)Hastie and
  Tibshirani]{hastie2017generalized}
Trevor~J Hastie and Robert~J Tibshirani.
\newblock \emph{Generalized additive models}.
\newblock Routledge, 2017.

\bibitem[He et~al.(2021)He, Li, and Xu]{he2021approximation}
Juncai He, Lin Li, and Jinchao Xu.
\newblock Approximation properties of deep relu cnns.
\newblock \emph{arXiv preprint arXiv:2109.00190}, 2021.

\bibitem[Hinton et~al.(2006)Hinton, Osindero, and Teh]{hinton2006fast}
Geoffrey~E Hinton, Simon Osindero, and Yee-Whye Teh.
\newblock A fast learning algorithm for deep belief nets.
\newblock \emph{Neural computation}, 18\penalty0 (7):\penalty0 1527--1554,
  2006.

\bibitem[Hu et~al.(2020)Hu, Shang, and Cheng]{hu2020sharp}
Tianyang Hu, Zuofeng Shang, and Guang Cheng.
\newblock Sharp rate of convergence for deep neural network classifiers under
  the teacher-student setting.
\newblock \emph{arXiv preprint arXiv:2001.06892}, 2020.

\bibitem[Ichimura(1993)]{ichimura1993semiparametric}
Hidehiko Ichimura.
\newblock Semiparametric least squares (sls) and weighted sls estimation of
  single-index models.
\newblock \emph{Journal of econometrics}, 58\penalty0 (1-2):\penalty0 71--120,
  1993.

\bibitem[Imaizumi \& Fukumizu(2019)Imaizumi and Fukumizu]{imaizumi2019deep}
Masaaki Imaizumi and Kenji Fukumizu.
\newblock Deep neural networks learn non-smooth functions effectively.
\newblock In \emph{The 22nd international conference on artificial intelligence
  and statistics}, pp.\  869--878. PMLR, 2019.

\bibitem[Kim et~al.(2021)Kim, Ohn, and Kim]{kim2021fast}
Yongdai Kim, Ilsang Ohn, and Dongha Kim.
\newblock Fast convergence rates of deep neural networks for classification.
\newblock \emph{Neural Networks}, 138:\penalty0 179--197, 2021.

\bibitem[Kingma \& Ba(2014)Kingma and Ba]{kingma2014adam}
Diederik~P Kingma and Jimmy Ba.
\newblock Adam: A method for stochastic optimization.
\newblock \emph{arXiv preprint arXiv:1412.6980}, 2014.

\bibitem[Klusowski \& Barron(2016)Klusowski and Barron]{klusowski2016risk}
Jason~M Klusowski and Andrew~R Barron.
\newblock Risk bounds for high-dimensional ridge function combinations
  including neural networks.
\newblock \emph{arXiv preprint arXiv:1607.01434}, 2016.

\bibitem[Klusowski \& Barron(2017)Klusowski and Barron]{klusowski2017minimax}
Jason~M Klusowski and Andrew~R Barron.
\newblock Minimax lower bounds for ridge combinations including neural nets.
\newblock In \emph{2017 IEEE International Symposium on Information Theory
  (ISIT)}, pp.\  1376--1380. IEEE, 2017.

\bibitem[Le et~al.(2011)Le, Ngiam, Coates, Lahiri, Prochnow, and
  Ng]{le2011optimization}
Quoc~V Le, Jiquan Ngiam, Adam Coates, Ahbik Lahiri, Bobby Prochnow, and
  Andrew~Y Ng.
\newblock On optimization methods for deep learning.
\newblock In \emph{ICML}, 2011.

\bibitem[LeCun et~al.(2015)LeCun, Bengio, and Hinton]{lecun2015deep}
Yann LeCun, Yoshua Bengio, and Geoffrey Hinton.
\newblock Deep learning.
\newblock \emph{nature}, 521\penalty0 (7553):\penalty0 436--444, 2015.

\bibitem[Lin et~al.(2021)Lin, Wang, Wang, and Zhou]{lin2021universal}
Shao-Bo Lin, Kaidong Wang, Yao Wang, and Ding-Xuan Zhou.
\newblock Universal consistency of deep convolutional neural networks.
\newblock \emph{arXiv preprint arXiv:2106.12498}, 2021.

\bibitem[Mao et~al.(2021)Mao, Shi, and Zhou]{mao2021theory}
Tong Mao, Zhongjie Shi, and Ding-Xuan Zhou.
\newblock Theory of deep convolutional neural networks iii: Approximating
  radial functions.
\newblock \emph{Neural Networks}, 2021.

\bibitem[Neyshabur(2020)]{neyshabur2020towards}
Behnam Neyshabur.
\newblock Towards learning convolutions from scratch.
\newblock \emph{Advances in Neural Information Processing Systems},
  33:\penalty0 8078--8088, 2020.

\bibitem[Oono \& Suzuki(2019)Oono and Suzuki]{oono2019approximation}
Kenta Oono and Taiji Suzuki.
\newblock Approximation and non-parametric estimation of resnet-type
  convolutional neural networks.
\newblock In \emph{International Conference on Machine Learning}, pp.\
  4922--4931. PMLR, 2019.

\bibitem[Petersen \& Voigtlaender(2018)Petersen and
  Voigtlaender]{petersen2018optimal}
Philipp Petersen and Felix Voigtlaender.
\newblock Optimal approximation of piecewise smooth functions using deep relu
  neural networks.
\newblock \emph{Neural Networks}, 108:\penalty0 296--330, 2018.

\bibitem[Petersen \& Voigtlaender(2020)Petersen and
  Voigtlaender]{petersen2020equivalence}
Philipp Petersen and Felix Voigtlaender.
\newblock Equivalence of approximation by convolutional neural networks and
  fully-connected networks.
\newblock \emph{Proceedings of the American Mathematical Society}, 148\penalty0
  (4):\penalty0 1567--1581, 2020.

\bibitem[Ruan \& Yuan(2010)Ruan and Yuan]{ruan2010dimension}
Lingyan Ruan and Ming Yuan.
\newblock Dimension reduction and parameter estimation for additive index
  models.
\newblock \emph{Statistics and its Interface}, 3\penalty0 (4):\penalty0
  493--499, 2010.

\bibitem[Schmidhuber(2015)]{schmidhuber2015deep}
J{\"u}rgen Schmidhuber.
\newblock Deep learning in neural networks: An overview.
\newblock \emph{Neural networks}, 61:\penalty0 85--117, 2015.

\bibitem[Schmidt-Hieber et~al.(2020)]{schmidt2020nonparametric}
Johannes Schmidt-Hieber et~al.
\newblock Nonparametric regression using deep neural networks with relu
  activation function.
\newblock \emph{Annals of Statistics}, 48\penalty0 (4):\penalty0 1875--1897,
  2020.

\bibitem[Suzuki(2018)]{suzuki2018adaptivity}
Taiji Suzuki.
\newblock Adaptivity of deep relu network for learning in besov and mixed
  smooth besov spaces: optimal rate and curse of dimensionality.
\newblock \emph{arXiv preprint arXiv:1810.08033}, 2018.

\bibitem[Tsybakov(2008)]{tsybakov2008introduction}
Alexandre~B Tsybakov.
\newblock \emph{Introduction to nonparametric estimation}.
\newblock Springer Science \& Business Media, 2008.

\bibitem[Vaswani et~al.(2017)Vaswani, Shazeer, Parmar, Uszkoreit, Jones, Gomez,
  Kaiser, and Polosukhin]{vaswani2017attention}
Ashish Vaswani, Noam Shazeer, Niki Parmar, Jakob Uszkoreit, Llion Jones,
  Aidan~N Gomez, {\L}ukasz Kaiser, and Illia Polosukhin.
\newblock Attention is all you need.
\newblock \emph{Advances in neural information processing systems}, 30, 2017.

\bibitem[Yarotsky(2017)]{yarotsky2017error}
Dmitry Yarotsky.
\newblock Error bounds for approximations with deep relu networks.
\newblock \emph{Neural Networks}, 94:\penalty0 103--114, 2017.

\bibitem[Yuan(2011)]{yuan2011identifiability}
Ming Yuan.
\newblock On the identifiability of additive index models.
\newblock \emph{Statistica Sinica}, pp.\  1901--1911, 2011.

\bibitem[Zhou(2020{\natexlab{a}})]{zhou2020theory}
Ding-Xuan Zhou.
\newblock Theory of deep convolutional neural networks: Downsampling.
\newblock \emph{Neural Networks}, 124:\penalty0 319--327, 2020{\natexlab{a}}.

\bibitem[Zhou(2020{\natexlab{b}})]{zhou2020universality}
Ding-Xuan Zhou.
\newblock Universality of deep convolutional neural networks.
\newblock \emph{Applied and computational harmonic analysis}, 48\penalty0
  (2):\penalty0 787--794, 2020{\natexlab{b}}.

\bibitem[Zhou \& Jetter(2006)Zhou and Jetter]{zhou2006approximation}
Ding-Xuan Zhou and Kurt Jetter.
\newblock Approximation with polynomial kernels and svm classifiers.
\newblock \emph{Advances in Computational Mathematics}, 25\penalty0
  (1):\penalty0 323--344, 2006.

\bibitem[Zhou \& Troyanskaya(2015)Zhou and Troyanskaya]{zhou2015predicting}
Jian Zhou and Olga~G Troyanskaya.
\newblock Predicting effects of noncoding variants with deep learning--based
  sequence model.
\newblock \emph{Nature methods}, 12\penalty0 (10):\penalty0 931--934, 2015.

\end{thebibliography}
\bibliographystyle{tmlr}

\appendix
\section{Appendix: Proof of Main Results}

\subsection*{Proof of Theorem \ref{approximationerror}}

\begin{proof}[Proof of Theorem \ref{approximationerror}]
The proof of this theorem is a combination of proofs of Lemma 3 and Theorem 2 in \cite{fang2020theory} with minor modification. For the completeness of our following results, we present the proof here.\\

For $m\in\NN$ and $\{\xi_1,\ldots, \xi_m\}$ with $0<\left\|\xi_j\right\|_2 \leq 1$, we take $W$ to be a sequence supported in $\left\{0,\cdots,md-1\right\}$ given by $W_{(j-1)d+(d-i)} = (\xi_{j})_i$
 where $j\in \left\{1,\cdots,m\right\}$ and $i \in \left\{1,\cdots,d\right\}$.
 By Lemma \ref{convolutionalfactorization} with $\mathcal{M} = md-1$, there exists a sequence of filters $\textbf{w} = \left\{w^{(j)}\right\}_{j=1}^{J}$
 supported in $\left\{0,\cdots, S\right\}$ with $J \geq \lceil \frac{\mathcal{M}}{S-1}\rceil$ satisfying the convolutional factorization
 $W = w^{(J)} \ast w^{(J-1)} \ast \cdots \ast w^{(2)} \ast w^{(1)}$. Here for $j=p+1, \ldots, J$, we have taken $w^{(j)}$ to be the delta sequence $\delta_0$ given by $\left(\delta_0\right)_0 =1$ and $\left(\delta_0\right)_k =0$ for $k\in\ZZ\setminus \{0\}$. By Lemma \ref{productofmatrix}, we know that
$$ T^{(J)} T^{(J-1)}\cdots T^{(1)} = T^{\left(J,1\right)} =\left(W_{i-k}\right)_{i=1, \dots, d+JS, k=1, \dots, d} \in \RR^{(d+JS) \times d}, $$
where $T^{(j)}$ is the Toeplitz matrix with filter $w^{(j)}$ for $j=1,2,\ldots, J$.

Now we construct bias vectors for convolutional layers.
We denote $\left\|w\right\|_1 = \sum_{k=-\infty}^{\infty}|w_k|$. We take $b^{(1)} = - \left\|w^{(1)}\right\|_1 \textbf{1}_{d_1}$ and
\begin{equation}\label{biasvector}
\begin{aligned}
b^{(j)} =  \left( \Pi^{j-1}_{p=1} \left\|w^{(p)}\right\|_1\right) T^{(j)} \textbf{1}_{d_{j-1}}
-  \left( \Pi^{j}_{p=1} \left\|w^{(p)}\right\|_1\right) \textbf{1}_{d_{j-1}+S},
\end{aligned}
\end{equation}
for $j=2,\cdots,J$. The bias vectors satisfy $b^{(j)}_{S+1} = \ldots = b^{(j)}_{d_j -S}$. Observe that $\left\| x\right\|_{\infty} \leq 1$ for $x\in \mathcal{X}$.
Denote $\|h\|_\infty =\max\{\|h_j\|_\infty: j=1, \ldots, q\}$ for a vector of functions $h: \mathcal{X} \to \RR^q$. We know that for $h: \mathcal{X}  \to \RR^{d_{j-1}}$,
$$\left\|T^{(j)}h\right\|_\infty \leq \left\|w^{(j)}\right\|_1 \left\|h\right\|_\infty. $$
Hence the components of $h^{(J)}(x)$ satisfy
$$\left(h^{(J)}(x)\right)_{kd}=\left\langle \xi_k, x \right\rangle + B^{(J)}, \qquad k=1, \ldots, m, $$
where $B^{(J)} = \Pi^{J}_{p=1} \left\|w^{(p)}\right\|_1$.

For the last fully connected layer, we have
$$h^{(J+1)}(x)=\s(F^{(J+1)}h^{(J)}(x)-b^{(J+1)}),$$
with the full matrix $F^{(J+1)}$ stated in (\ref{fullmatrices}). The bias vector in the last layer is given by $b^{(J+1)}_{(j-1)(2N+3) +i}=B^{(J)} + t_{i}$ for $j=1, \ldots, m, \ i=1, \ldots, 2N+3$ where $\textbf{t} := \left\{t_1 < \cdots < t_{2N+3}\right\}$ is given in Lemma \ref{spline}.
Then the fully-connected layer $h^{(J+1)}(x)\in\RR^{m(2N+3)}$ of the deep network is
\begin{equation}
\begin{aligned}\label{firstfull}
 h^{(J+1)}_{(j-1)(2N+3) +i} =\sigma \left( \langle \xi_j, \cdot \rangle - t_i\right), & ~~~\text{for}~1\leq j\leq m, \ 1 \leq i \leq 2N+3.
\end{aligned}
\end{equation}

We choose the coefficient vector $c\in \RR^{m(2N+3)}$ by means of the linear operator ${\mathcal L}_N$ (\ref{diffoperator}) as
$$ \left\{\left(c\right)_{(j-1)(2N+3) +i}\right\}_{i=1}^{2N+3} = N{\mathcal L}_N \left(\left\{g_j (t_i)\right\}_{i=2}^{2N+2}\right), \qquad j=1, \ldots, m.$$
Then by the identity (\ref{splineiden}), we have
\begin{eqnarray*}
c \cdot h^{(J+1)}(x) &=& N\sum_{j=1}^m \sum_{i=1}^{2N+3} \left(c\right)_{(j-1)(2N+3) +i} \sigma \left( \langle \xi_j, x \rangle - t_i\right) \\
&=& \sum_{j=1}^m L_{\textbf t}\left(g_j\right)\left(\langle \xi_j, x \rangle\right).
\end{eqnarray*}
Combining this with the additive ridge form (\ref{additiveridge}) of $f$ and Lemma \ref{spline}, we know that for $x \in \mathcal{X}$,
\begin{align*}
&\left|f(x) - c\cdot h^{(J+1)}(x) \right|
=\left|\sum_{j=1}^m g_j (\langle \xi_j, x\rangle)-\sum_{j=1}^m L_{\textbf t}\left(g_j\right)\left(\langle \xi_j, x \rangle\right)\right| \\
\leq & \sum_{j=1}^m \|g_j -L_{\mathbf t}(g_j)\|_{C[-1,1]} \leq \sum_{j=1}^m |g_j|_{W^\alpha_\infty} N^{-\alpha}.
\end{align*}
Then the desired error bound is verified.

\end{proof}

\subsection*{Proof of Lemma \ref{lemma:upperboundforbias}}

\begin{proof}[Proof of Lemma \ref{lemma:upperboundforbias}]
Since $0 < \left\|\xi_j\right\|_2\leq 1$, there exists some $l \in \left\{1,\cdots,d\right\}$ such that $\left(\xi_m\right)_l \neq 0$ and $\left(\xi_m\right)_i = 0$ for any $i < l$. Then we know that the sequence $W$ constructed in the proof of Theorem \ref{approximationerror} is supported in $\left\{0,\cdots, md-l\right\}$ with $W_{md-l} = \left(\xi_m\right)_l \neq 0$. Now we set a sequence $W' = \frac{1}{\left| \left(\xi_m\right)_l\right|}W$, then $W'$ satisfies the condition in Lemma \ref{lemma:cauchy} with $K = md-l$. By Lemma \ref{lemma:cauchy}, all the complex roots of $W'$ are located in the disk of radius $1 + \max_{j=0,\cdots,md-1} \left|\frac{W_j}{\left(\xi_m\right)_l}\right| \leq 1+\left|\frac{1}{\left(\xi_m\right)_l}\right|$, and the filters $\left\{w^{(j)}\right\}_{j=1}^{J}$ constructed in our networks satisfying Lemma \ref{convolutionalfactorization} can be bounded as
$$\left\|w^{(j)}\right\|_\infty \leq 2^S  \left(1 + \left|\frac{1}{ \left(\xi_m\right)_l} \right| \right)^S,~~~j = 1, \cdots, J.$$

For the matrix $F^{(J+1)}$, we have $\left\|F^{(J+1)}\right\|_\infty \leq 1$. For the vector $c$ in the hypothesis space, we know that from the construction in the proof of Theorem \ref{approximationerror} we have $\left\|c\right\|_\infty \leq 4 N G$.
Then we can take $B$ as
$$B = \max \left\{ 2^S  \left(1 + \left| \frac{1}{ \left(\xi_m\right)_l} \right|\right)^S,4G  \right\}.$$
\end{proof}

\subsection*{Proof of Lemma \ref{lemma:biasandoutput}}
\begin{proof}[Proof of Lemma \ref{lemma:biasandoutput}]
The bias vectors $\left\{b^{(j)}\right\}_{j=1}^{J}$ of CNNs are chosen in the proof of Theorem \ref{approximationerror} as $b^{(1)} = - \left\|w^{(1)}\right\|_1 \textbf{1}_{d_1}$ and clearly we have $\left\|b^{(1)}\right\|_\infty \leq (S+1)B\leq 2\left(S+1\right)B.$ For $j=2,\cdots,J,$ we have $b^{(j)} = \left(\Pi_{p=1}^{j-1} \left\|w^{(p)}\right\|_1\right)T^{(j)} \textbf{1}_{d_{j-1}} - \left(\Pi_{p=1}^{j} \left\|w^{(p)}\right\|_1\right)\textbf{1}_{d_{j-1}+S}$. Thus, we know that
\begin{align*}
\left\|b^{(j)}\right\|_\infty &\leq ((S+1)B)^{j-1} \left\|T^{(j)} \textbf{1}_{d_{j-1}} \right\|_\infty \\
&+ ((S+1)B)^{j} \leq 2((S+1)B)^{j}.
\end{align*}
For $b^{(J+1)}$ constructed in Theorem \ref{approximationerror}, clearly we have $ \left\|b^{(J+1)}\right\|_\infty \leq 2\left(\left(S+1\right)B\right)^J+2 \leq 2\left(\left(S+1\right)B\right)^{J+1}. $
If $w$, $b$ in each layer and $F$, $c$ satisfy the restrictions in ($\ref{hypothesisspace}$), then by Lipschitz condition of ReLU and the special form of Toeplitz matrix $T$, we have
\begin{align*}
\left\|h^{(j)}(x)\right\|_\infty \leq
(S+1)B \left\|h^{(j-1)}(x)\right\|_\infty +2((S+1)B)^j,
\end{align*}
for $j=1,\cdots,J$. Combining this with $\left\|h^{(0)}\right\|_\infty \leq 1$, we can obtain by induction
\begin{align*}
\left\|h^{(j)}(x)\right\|_\infty \leq
(2j+1) ((S+1)B)^j.
\end{align*}
It can be checked easily that the above inequality holds when $j=J+1$ by the special form of $F^{(J+1)}$.

\end{proof}

\subsection*{Proof of Lemma \ref{lemma:coveringnumberbound}}

\begin{proof}[Proof of Lemma \ref{lemma:coveringnumberbound}]

If $\hat{c} \cdot \hat{h}^{(J+1)}(x)$ is another function from the hypothesis space $\mathcal{H}$ induced by $\hat{\textbf{w}}, \hat{\textbf{b}}, \hat{F}^{(J+1)}, \hat{c}$ satisfying the restrictions in (\ref{hypothesisspace}) and
\begin{align*}
\left\|w^{(j)} - \hat{w}^{(j)} \right\|_\infty &\leq \delta, \left\|b^{(j)} - \hat{b}^{(j)} \right\|_\infty \leq \delta,\\
\left\|c - \hat{c} \right\|_\infty &\leq \delta, \left\|F^{(J+1)} - \hat{F}^{(J+1)} \right\|_\infty \leq \delta,
\end{align*}
then by the Lipschitz property of ReLU, we have

\begin{align*}
&\left\|h^{(j)} - \hat{h}^{(j)} \right\|_\infty
\leq \left\| \left(T^{w^{(j)}} h^{(j-1)}(x) - b^{(j)} \right)- \left(T^{\hat{w}^{(j)}} \hat{h}^{(j-1)}(x) - \hat{b}^{(j)} \right) \right\|_\infty \\
\leq &\left\|T^{w^{(j)}}\left( h^{(j-1)}(x) - \hat{h}^{(j-1)}(x)  \right) \right\|_\infty
+\left\|\left(T^{w^{(j)}}  - T^{\hat{w}^{(j)}}  \right) \hat{h}^{(j-1)}(x)\right\|_\infty + \delta,
\end{align*}
for $j=1,\cdots,J$.
Combining the above equation with (\ref{outputbound}) and the special form of the Toeplitz matrix $T^{w^{(j)}} - T^{\hat{w}^{(j)}} = T^{w^{(j)}- \hat{w}^{(j)}}$, we know that
$$\left\|h^{(j)} - \hat{h}^{(j)} \right\|_\infty \leq (S+1)B \left\|h^{(j-1)} - \hat{h}^{(j-1)} \right\|_\infty + (S+1) \delta (2j-1)((S+1)B)^{j-1} + \delta.$$
Further by induction and $h^{(0)} - \hat{h}^{(0)} = 0$, we have
\begin{align*}
\left\|h^{(j)} - \hat{h}^{(j)} \right\|_\infty
&\leq (j^2+j) ((S+1)B)^j \delta,\\
&\leq 2j^2((S+1)B)^j \delta,~~~j=1,\cdots,J.
\end{align*}
For the last fully connected layer and we know that
\begin{align*}
& \left\|h^{(J+1)} - \hat{h}^{(J+1)} \right\|_\infty \\
\leq &\left\| \left(F^{J+1} h^{(J)}(x) - b^{(J+1)} \right)- \left(\hat{F}^{J+1} \hat{h}^{(J)}(x) - \hat{b}^{(J+1)} \right) \right\|_\infty \\
 \leq &\left\|F^{J+1}\left( h^{(J)}(x) - \hat{h}^{(J)}(x)  \right) \right\|_\infty
+\left\|\left(F^{J+1}  - \hat{F}^{J+1}  \right) \hat{h}^{J}(x)\right\|_\infty + \delta\\
 \leq& 2J^2((S+1)B)^J\delta + \delta (2J+1)((S+1)B)^J + \delta \\
 \leq& 5J^2((S+1)B)^J \delta.
\end{align*}
Finally for the output function, we have
\begin{align*}
& \left\|c \cdot h^{(J+1)} - \hat{c} \cdot \hat{h}^{(J+1)} \right\|_\infty \\
 \leq &\left\|c \cdot \left( h^{(J+1)}(x) - \hat{h}^{(J+1)}(x)  \right) \right\|_\infty
+\left\|\left(c - \hat{c}   \right) \cdot  \hat{h}^{(J+1)}(x)\right\|_\infty\\
 \leq & (2N+3)mNB5J^2((S+1)B)^{J}\delta + (2N+3)m \delta   (2J+3)((S+1)B)^{J+1} \\
\leq & 6 (J^2+4) m (2N^2+3N) ((S+1)B)^{J+1} \delta.
\end{align*}

Since $J = \lceil\frac{md-1}{S-1} \rceil \leq md-1 $, for the output function, we have
\begin{align*}
 \left\|c \cdot h^{(J+1)} - \hat{c} \cdot \hat{h}^{(J+1)} \right\|_\infty \leq 150m^3 d^2N^2 ((S+1)B)^{md} \delta := \hat{\delta}.
\end{align*}
Then, by taking an $\delta$-net for each of $w^{(j)}$, $b^{(j)}$, $c$ and $F^{(J+1)}$, we know that the covering number of the hypothesis space $\HH$ with radius $\hat{\delta} \in (0,1]$ can be bounded as
\begin{align*}
\mathcal{N} \left(\hat{\delta,} \mathcal{H}\right)
\leq & \left\lceil \frac{2B}{ \delta}\right\rceil^{(S+1)J}
\Pi_{j=1}^J \left\lceil \frac{\left(2(S+1)B\right)^j}{\delta}\right\rceil^{2S+1}
\left\lceil \frac{\left(2(S+1)B\right)^{J+1}}{\delta}\right\rceil^{m(2N+3)}\\
& \left\lceil\frac{2NB}{\delta}\right\rceil^{m(2N+3)} \left\lceil\frac{2}{\delta}\right\rceil^{m(2N+3)}\\
\leq & \left( \frac{1}{ \delta}\right)^{(S+1)J+ J(2S+1)+3m(2N+3)} N^{m(2N+3)} (3(S+1)B)^{(S+1)J+ \frac{J^2+J}{2}(2S+1)+m(2N+3)(J+3)}\\
\leq & \left( \frac{1}{\hat{ \delta}}\right)^{C_{S,d,m}N} N^{C'_{S,d,m}N} (153md(S+1)B)^{3mdC_{S,d,m}N},
\end{align*}
where $C_{S,d,m} = 5(m^2d +2m) +(md-1)(mds+md+s+1)$, $C_{S,d,m}' = 2 C_{S,d,m} +5m$. Thus we have
\begin{align*}
\log\left\{\mathcal{N} \left(\hat{\delta,} \mathcal{H}\right)\right\}
\leq C_{S,d,m,B}N \log\left\{\frac{1}{\hat{ \delta}}\right\} + C''_{S,d,m,B}N \log\{N\},
\end{align*}
where $C''_{S,d,m,B} = 3mdC_{S,d,m}\log \left(153md(S+1)B\right) + C'_{S,d,m}$.
\end{proof}

\subsection*{Proof of Theorem \ref{theorem:coveringnumber}}

\begin{proof}[Proof of Theorem \ref{theorem:coveringnumber}]
Since $f_{D,\mathcal{H}}$ is the empirical minimizer from (\ref{empiricalminimizer}), we have $\mathcal{E}_D \left( f_{D,\mathcal{H}} \right) \leq \mathcal{E}_D \left( h \right)$ for any $h \in \mathcal{H}$ and $\mathcal{E}_D \left(\pi_Mf_{D,\mathcal{H}} \right) \leq \mathcal{E}_D \left( f_{D,\mathcal{H}}  \right)$. Then we can derive
\begin{align*}
\mathcal{E}\left(\pi_Mf_{D,\mathcal{H}} \right)  - \mathcal{E} \left(f_\rho\right)
= &\mathcal{E}\left(\pi_Mf_{D,\mathcal{H}} \right) -\mathcal{E}_D\left(\pi_Mf_{D,\mathcal{H}}
\right) \\
+ &\mathcal{E}_D\left(\pi_Mf_{D,\mathcal{H}}\right) - \mathcal{E}_D\left(h
\right) \\
+ & \mathcal{E}_D\left(h\right) -\mathcal{E}\left(h\right) + \mathcal{E}\left(h\right) -\mathcal{E}\left(f_\rho\right)\\
\leq &\mathcal{E}\left(\pi_Mf_{D,\mathcal{H}} \right) -\mathcal{E}_D\left(\pi_Mf_{D,\mathcal{H}}
\right)\\
+ & \mathcal{E}_D\left(h\right) -\mathcal{E}\left(h\right) + \mathcal{E}\left(h\right) -\mathcal{E}\left(f_\rho\right).\\
\end{align*}
For simplicity, we denote
$$\mathcal{D} \left( \mathcal{H}\right) = \mathcal{E}\left(h\right) -\mathcal{E}\left(f_\rho\right) = \left\|h - f_\rho\right\|_{\rho_{\mathcal{X}}}^2,$$
$$S_1 \left(n, \mathcal{H}\right) = \left\{\mathcal{E}_D\left(h\right) - \mathcal{E}_D \left(f_\rho\right)\right\} - \left\{\mathcal{E}\left(h\right) - \mathcal{E}\left(f_\rho\right) \right\},$$
and
$$S_2 \left(n, \mathcal{H}\right) = \left\{\mathcal{E}\left(\pi_Mf_{D,\mathcal{H}}\right) - \mathcal{E}\left(f_\rho\right) \right\} - \left\{ \mathcal{E}_D \left(\pi_Mf_{D,\mathcal{H}}\right) - \mathcal{E}_D \left(f_\rho\right)\right\}.$$
Thus we have
\begin{align*}
\mathcal{E}\left(\pi_Mf_{D,\mathcal{H}} \right)  - \mathcal{E} \left(f_\rho\right)
\leq
\mathcal{D} \left( \mathcal{H}\right)
+
S_1 \left(n, \mathcal{H}\right)
+
S_2 \left(n, \mathcal{H}\right).
\end{align*}
Now we define the random variable $\eta$ on $\mathcal{Z}$ to be
$$ \eta(z) = \left(y - h(x)\right)^2 - \left(y - f_\rho(x)\right)^2,$$
and $\sigma^2(\eta)$ is the variance.
Then it can be easily derived that $\left| \eta(z)\right| \leq \left(3M + \left\|h\right\|_\infty\right)^2$, $ \left|\eta - \mathbb{E} \left(\eta\right) \right| \leq 2 \left(3M + \left\|h\right\|_\infty\right)^2$ and $\sigma^2(\eta) \leq\mathbb{E}\left(\eta^2\right) \leq \left(3M + \left\|h\right\|_\infty\right)^2\mathcal{D} \left( \mathcal{H}\right)$. Then we can apply Lemma \ref{lemma:Hoeffding} to get
\begin{equation*}
\mathbb{P} \left\{S_1 \left(n,\mathcal{H}\right)< \epsilon \right\}
\geq 1- \exp \left\{- \frac{n \epsilon^2}{2 \left( 3M + \left\|h\right\|_\infty \right)^2 \left( \mathcal{D}\left(\mathcal{H}\right) + \frac{2\epsilon}{3}\right)}\right\}.
\end{equation*}
Now we consider a function set
$$\mathcal{G} := \left\{\tilde{f} = \left(\pi_Mf(x) - y\right)^2 - \left(f_\rho(x) - y\right)^2  : f\in \mathcal{H}\right\},$$ and for any fixed $\tilde{f} \in \mathcal{G}$, there exists an $f \in \mathcal{H}$ such that $\tilde{f}(z) = \left(\pi_Mf(x) - y\right)^2 - \left(f_\rho(x) - y\right)^2$. Then we know that $\mathbb{E} \left(\tilde{f}\right) = \left\| \pi_Mf-f_\rho\right\|_{\rho_{\mathcal{X}}}^2$ and $\frac{1}{n}\sum_{i=1}^n \tilde{f}\left(z_i\right) = \mathcal{E}_D \left(\pi_Mf\right) - \mathcal{E}_D \left(f_\rho\right).$
It can be easily derived that $\left|\tilde{f}(z)\right| \leq 8M^2$, $\left|\tilde{f}(z) - \mathbb{E} \left(\tilde{f}\right)\right| \leq 16M^2$ and $\mathbb{E} \left( \tilde{f}^2\right) \leq 16M^2 \mathbb{E} \left(\tilde{f}\right)$. Then we can apply Lemma \ref{lemma:Hoeffding2} to $\mathcal{G}$ with $B' = \tilde{c} = 16M^2$ to get
\begin{equation}\label{probabilitybound}
\sup_{f\in \mathcal{H}}  \frac{\mathcal{E}\left(\pi_Mf\right) - \mathcal{E}\left(f_\rho\right)  - \left( \mathcal{E}_D\left(\pi_Mf\right) - \mathcal{E}_D\left(f_\rho\right) \right) }{\sqrt{\mathcal{E} \left(\pi_Mf\right) - \mathcal{E}\left(f_\rho\right) + \epsilon}} \leq \sqrt{\epsilon},
\end{equation}
holds with probability at least
$1- \mathcal{N} \left( \epsilon, \mathcal{G}, L^\infty \left( \mathcal{X} \times \mathcal{Y}\right)\right) \exp \left\{-\frac{3n \epsilon}{512M^2}\right\}$.
Since for any $f_1, f_2 \in \mathcal{H}$, we have
$$\left|\tilde{f}_{1} - \tilde{f}_{2}\right| \leq 4M \left|f_1(x) - f_2(x)\right|.$$
Thus, an $\frac{\epsilon}{4M}$ covering of $\mathcal{H}$ provides an $\epsilon$ covering of $\mathcal{G}$ for any $\epsilon>0$ which implies that
\begin{equation*}
\mathcal{N} \left( \epsilon, \mathcal{G}, L^\infty \left( \mathcal{X} \times \mathcal{Y}\right)\right) \leq
\mathcal{N} \left( \frac{\epsilon}{4M}, \mathcal{H}, L^\infty \left( \mathcal{X} \right)\right).
\end{equation*}
Combining this with (\ref{coveringnumbercondition}), we have
\begin{equation*}
\mathcal{N} \left( \epsilon, \mathcal{G}, L^\infty \left( \mathcal{X} \times \mathcal{Y}\right)\right) \leq
\exp\left\{C_1n_1\log \frac{4M}{\epsilon} + C_2n_2 \log {n_2}\right\}.
\end{equation*}
This together with (\ref{probabilitybound}) implies that
\begin{equation*}
S_2\left(n,\mathcal{H}\right)  \leq \frac{1}{2} \left(\mathcal{E} \left( \pi_Mf_{D,\mathcal{H}}\right) - \mathcal{E} \left(f_\rho\right) \right)+ \epsilon,
\end{equation*}
holds with confidence at least $1 - \exp\left\{C_1n_1\log \frac{4M}{\epsilon} + C_2n_2 \log {n_2} -\frac{3n\epsilon}{128 M^2}\right\}$.
Combing all these inequalities, we have
\begin{equation*}
\begin{aligned}
\mathcal{E}\left(\pi_Mf_{D,\mathcal{H}}\right) - \mathcal{E} \left(f_\rho\right)  \leq 2\mathcal{D}\left(\mathcal{H}\right) + 4\epsilon,
\end{aligned}
\end{equation*}
holds with probability at least
$1 - \exp\left\{C_1n_1\log \frac{4M}{\epsilon} + C_2n_2 \log {n_2} -\frac{3n\epsilon}{128 M^2}\right\}- \exp \left\{- \frac{n \epsilon^2}{2 \left( 3M + \left\|h\right\|_\infty \right)^2 \left( \mathcal{D}\left(\mathcal{H}\right) + \frac{2\epsilon}{3}\right)}\right\}$. We finish the proof by letting $\delta = 4 \epsilon$.
\end{proof}

\subsection*{Proof of Theorem \ref{lowerbound1}}

\begin{proof}[Proof of Theorem \ref{lowerbound1}]
First, we associate a probability measure $\rho_f \in \mathcal{M}(\rho,\Theta)$ to a pair $(\mu,f)$ where $\mu$ is a measure on $\mathcal{X}$ and $f \in \Theta$. We assume that $\mu$ is upper and lower bounded by constants $\tau_1$ and $\tau_2$. Now we define a probability measure $\rho_f$ by
\begin{equation}\label{measure}
d\rho_f(x,y)
=\left[\frac{T+f(x)}{2T}d\delta_T(y) + \frac{T-f(x)}{2T}d\delta_{-T}(y)\right]d\mu(x), 
\end{equation}

where $T=4mG$ and $d\delta_T$ denotes the Dirac delta with unit mass at $T$. It can be verified that $\rho_f$ is a probability measure on $\mathcal{X} \times \mathcal{Y}$ with $\mu$ being the marginal distribution $\rho_X$ and $f$ the regression function. Moreover, $M \geq 4mG$ ensures $\left|y\right| \leq M$ almost surely. Hence for any $f\in \Theta$, $\rho_f \in \mathcal{M}(\rho,\Theta).$

Now we would apply \textbf{Theorem 2.7} in \cite{tsybakov2008introduction} to prove our conclusion. It states that if for some $\widetilde{N} \geq 1$ and $\kappa>0$, $f_0, \cdots, f_{\widetilde{N}} \in \Theta$ are such that
\begin{itemize}
\item [1.]
$\left\|f_i - f_j\right\|^2_{L^2_{\rho_{\mathcal{X}}}} \geq \kappa n^{-\frac{2\alpha}{2\alpha +1}}$ for all $0 \leq i< j\leq \widetilde{N},$
\item [2.]
$\frac{1}{\widetilde{N}} \sum_{j=1}^{\widetilde{N}} \text{KL}\left(\rho_j^n \| \rho_0^n\right) \leq \frac{\log{\widetilde{N}}}{9},$
\end{itemize}
then there exists a positive constant $c_{\kappa,\tau_1,\tau_2}$ such that
\begin{equation}\label{tsybakovlowerbound}
\inf_{\hat{f}_n} \sup_{\rho \in\mathcal{M}(\rho,\Theta)} \mathbb{E} \left\|\hat{f}_n(x) - f_\rho(x)\right\|^2_{L^2_{\rho_{\mathcal{X}}}} \geq c_{\kappa,\tau_1,\tau_2} n^{-\frac{2\alpha}{2\alpha +1}}.
\end{equation}

Now we construct a finite sequence $f_0,\cdots,f_{\hat{N}_n}$ in the space $\Theta$. First, we let function $K \in L^2(\mathbb{R}) \cap \text{Lip}^\alpha(\mathbb{R})$ be supported on $[-\frac{1}{2},\frac{1}{2}]$ with Lipschitz constant $\frac{1}{2}L$ and $\left\|K\right\|_\infty \leq G$. Clearly this function exists. We partition the set $[-1,1]$ into $\hat{N}_n =\lfloor c_\tau n^{\frac{1}{2\alpha+1}} \rfloor$  interval $\left\{A_{n,k}\right\}_{k=1}^{\hat{N}_n}$ with equivalent length $\frac{2}{\hat{N}_n}$, centers $\left\{u_k\right\}_{k=1}^{\hat{N}_n}$ and $c_\tau = \frac{2304\tau_1}{15T^2}\left\|K\right\|_2^2 +1$.
Now we define function as
$$\psi_{u_k}(x) = \frac{1}{{\hat{N}_n}^\alpha} K\left(\frac{1}{2}\hat{N}_n(x-u_k)\right),~~~\text{for}~ k = 1,\cdots,\hat{N}_n. $$
It is clear that $\psi_{u_k}(x)$ are Lipschitz-$\alpha$ functions with Lipschitz constant $\frac{1}{2^{1+\alpha}}L$ for $0 < \alpha \leq 1$ for $k = 1,\cdots,\hat{N}_n$ and $\left\|\psi_{u_k}(x)\right\|_\infty \leq G$. From the definition above, we can also see that for $u_i \neq u_j,$ $\psi_{u_i}(x)$ and $\psi_{u_j}(x)$ have different supports. Now we consider the set of all binary sequences of length ${\hat{N}_n}$,
$$\Omega =\left\{\omega = \left(\omega_1,\cdots,\omega_{\hat{N}_n}\right),\omega_i \in \left\{0,1\right\}\right\} = \left\{0,1\right\}^{\hat{N}_n},$$
and define functions $\phi_\omega(x)$ as
$$\phi_\omega(x) = \sum_{k=1}^{\hat{N}_n}\omega_k \psi_{u_k}(x).$$
Now we are going to show that $\phi_\omega(x)$ is a Lipschitz-$\alpha$ function with Lipschitz constant $L$ and $\left\|\phi_\omega(x) \right\|_\infty \leq G$ for any $\omega$. The sup-norm can be check easily by noticing that this is a summation on different supports and $|\omega_k| \leq 1$.
Now we are going to check the Lipchitz constant. If $x,y \in A_{n,i}$, then we have
\begin{equation*}
    \left|\phi_\omega(x)-\phi_\omega(y)\right|
    = \left|\psi_{u_i}(x)-\psi_{u_i}(y)\right|
    \leq L \left|x-y\right|^\alpha.
\end{equation*}
If $x\in A_{n,i}$ and $y\in A_{n,j}$ for $i \neq j$, and we let $\bar{x}$ and $\bar{y}$ be the boundary points of $A_{n,i}$ and $A_{n,j}$ between $x$ and $y$, then we have,
\begin{equation*}
\begin{aligned}
    &\left|\phi_\omega(x)-\phi_\omega(y)\right|\\
    = &\left|\omega_i\psi_{u_i}(x)-\omega_j\psi_{u_j}(y)\right|\\
    \leq&  \left|\psi_{u_i}(x)\right|+\left|\psi_{u_j}(y)\right|\\
    =&  \left|\psi_{u_i}(x) - \psi_{u_i}(\bar{x})\right|+\left|\psi_{u_j}(y)-\psi_{u_j}(\bar{y})\right|\\
    \leq& 2^{-1-\alpha}L \left(\left|x-\bar{x}\right|^\alpha + \left|y- \bar{y}\right|^\alpha \right)\\
    = &2^{-\alpha}L \left(\frac{1}{2}\left|x-\bar{x}\right|^\alpha + \frac{1}{2} \left|y- \bar{y}\right|^\alpha \right)\\
     \leq& L \left(\frac{\left|x-\bar{x}\right| + \left|y- \bar{y}\right|}{2}\right)^\alpha\\
    \leq& L\left|x-y\right|^\alpha,
\end{aligned}
\end{equation*}
where we have applied Jensen's inequality. Then $\phi_\omega(x)$ is a Lipschitz-$\alpha$ function with Lipschitz constant $L$.
Since for any $f \in \Theta$, it can be written in the form of $\sum_{i=1}^m g_i(\xi_i \cdot x)$ with $g_i \in W^\alpha_\infty[-1,1]$. Now we can simply take $m=1$, $\xi_1= [1,0,\cdots,0]$ and $g_1(x)=\phi_{\omega}(x_1)$. Then we denote $f_\omega(x)=\phi_\omega(x_1)$ where $x_1$ is the first component of $x\in \mathbb{R}^d$ and clearly we know that $f_\omega \in \Theta$.

For any $u_k$, we have $\left\|\psi_{u_k}\right\|_2^2 = \frac{2}{\hat{N}_n^{1+2\alpha}} \left\|K\right\|_2^2$.
We use $\text{Ham} (\omega,\omega') = \sum_{k=1}^{\hat{N}_n} I \left(\omega_k \neq \omega_k'\right)$ to denote the Hamming distance between the binary sequences $\omega$ and $\omega'$. We know that $$\left\|f_\omega - f_{\omega'}\right\|_2^2 = \text{Ham} (\omega,\omega') \frac{2}{\hat{N}_n^{1+2\alpha}}\left\|K\right\|_2^2.$$
By the Varshamov-Gilbert bound (Lemma 2.9 in \cite{tsybakov2008introduction}), we conclude that there exists a subset $\mathcal{W} \subset \Omega$ of cardinality $\left|\mathcal{W}\right| \geq 2^{\frac{\hat{N}_n}{8}}$ such that $\text{Ham} (\omega,\omega') \geq \frac{\hat{N}_n}{8}$ for all $\omega, \omega' \in \mathcal{W}$, $\omega \neq \omega'$.
Then we have
$$\left\|f_\omega - f_{\omega'}\right\|_2^2 \geq \frac{1}{4} \hat{N}_n^{-2\alpha} \left\|K\right\|_2^2,~~\text{for}~\omega, \omega' \in \mathcal{W}, \omega \neq \omega'.$$
Further, we know that
\begin{equation*}
\begin{aligned}
\left\|f_\omega - f_{\omega'}\right\|_{L^2_{\rho_\mathcal{X}}}^2 \geq \kappa n^{-\frac{2\alpha}{2\alpha+1}},~~\text{for}~\omega, \omega' \in \mathcal{W}, \omega \neq \omega',
\end{aligned}
\end{equation*}
by taking $\kappa = \frac{1}{4} \tau_2 c_\tau^{2} \left\|K\right\|_2^2$. The above inequality verifies the condition 1 in \textbf{Theorem 2.7} in \cite{tsybakov2008introduction}.

For simplicity, we use $f_i$ to denote $f_{\omega^i}$. Now we consider the KL-divergence $\text{KL}\left(\rho_{f_i} | \rho_{f_j}\right)$. By the euation (\ref{measure}), we have $d\rho_{f_i}(x,y) = g(x,y)d\rho_{f_j}(x,y)$ with
$$g(x,y)= \frac{T+\text{sign}(y)f_i(x)}{T+\text{sign}(y)f_j(x)} = 1+ \frac{\text{sign}(y)(f_i-f_j)}{T+ \text{sign}(y)f_j}.$$
Then we know that
\begin{equation*}
\begin{aligned}
&\text{KL}\left(\rho_{f_i} | \rho_{f_j}\right)\\
= &\int_{\mathcal{X}}  \frac{T+f_i(x)}{2T} \ln \left(1+ \frac{f_i(x)-f_j(x)}{T+f_j(x)}\right) +\\
&\frac{T-f_i(x)}{2T} \ln \left(1+ \frac{f_i(x)-f_j(x)}{T-f_j(x)}\right)  d\mu(x) \\
\leq& \int_{\mathcal{X}}\frac{f_i(x)-f_j(x)}{2T} \left(\frac{T+f_i(x)}{T+f_j(x)} - \frac{T-f_i(x)}{T-f_j(x)}\right)d_{\mu}(x)\\
\leq& \frac{16}{15T^2} \left\|f_i-f_j\right\|^2_{L^2_{\rho_{\mathcal{X}}}}.
\end{aligned}
\end{equation*}

Since $\text{KL}\left(\rho^n_{f_i} | \rho^n_{f_j}\right) \leq \frac{16}{15T^2} n\left\|f_i - f_j\right\|^2_{L^2_{\rho_{\mathcal{X}}}}$, we have

\begin{equation*}
\begin{aligned}\text{KL}\left(\rho^n_{f_i} | \rho^n_{f_j}\right)
&\leq\frac{16\tau_1}{15T^2} n\left\|f_i - f_j\right\|^2_2 \\
&\leq \frac{16\tau_1}{15T^2}n\text{Ham} (\omega,\omega') \frac{2}{\hat{N}_n^{1+2\alpha}}\left\|K\right\|_2^2\\
&\leq \frac{2304\tau_1}{15T^2}\left\|K\right\|_2^2 \frac{n}{\hat{N}_n^{1+2\alpha}}\frac{\hat{N}_n}{72}.
\end{aligned}
\end{equation*}
By $c_\tau = \frac{2304\tau_1}{15T^2}\left\|K\right\|_2^2 +1$, we know that
\begin{equation*}
\begin{aligned}\text{KL}\left(\rho^n_{f_i} | \rho^n_{f_j}\right)
\leq \frac{\hat{N}_n}{72} \leq \frac{\log |\mathcal{W}|}{9}.
\end{aligned}
\end{equation*}

Then two conditions of inequality (\ref{tsybakovlowerbound}) are satisfied by taking $\left|\mathcal{W}\right| = \tilde{N}$. Then we have

\begin{align*}
\inf_{\hat{f}_n} \sup_{\rho \in\mathcal{M}(\rho,\Theta)} \mathbb{E} \left\|\hat{f}_n(x) - f_\rho(x)\right\|^2_{L^2_{\rho_{\mathcal{X}}}} \geq c_{\kappa,\tau_1,\tau_2} n^{-\frac{2\alpha}{2\alpha +1}}.
\end{align*}
The proof can be applied directly to Corollary \ref{lowerbound2} by noticing that $x_1$ is a polynomial of input $x$. For simplicity, $\left\|K\right\|_2$ can be chosen to be 1. We know that $\tau_1$ and $\tau_2$ are actually upper and lower bounds of marginal density of $x_1$, the first component of $x$, thus we can simply take $\tau_1=1$ and $\tau_2 = \frac{1}{100}$. Since $\kappa$ is a constant depending on $\tau_1$, $\tau_2$, $m$ and $G$, the constant $c_{\kappa,\tau_1,\tau_2}$ essentially depends on $m$ and $G$.

This finishes the proof.

\section*{Auxiliary Lemmas}
\begin{lemma}\label{spline}[\cite{fang2020theory}]
Given an integer $N$, let $\textbf{t}=\{t_i\}_{i=1}^{2N+3}$ be the uniform mesh on $\left[-1-\frac{1}{N}, 1+\frac{1}{N}\right]$ with $t_i=-1+\frac{i-2}{N}$.
 Construct a linear operator $L_{\textbf{t}}$ on $C[-1,1]$ by
\[L_{\textbf t}(f)(u)=\sum_{i=2}^{2N+2} f(t_i)\delta_i(u), \quad u\in [-1,1], \, f\in C[-1,1],\]
where $\delta_i\in C(\RR)$, $i=2,\ldots, 2N+2$, is given by
\begin{equation}\label{delta}\delta_{i}(u)=N(\sigma\left(u-t_{i-1}\right)-2\sigma\left(u-t_{i}\right)+ \sigma\left(u-t_{i+1}\right)).
\end{equation}
Then for $g\in C[-1,1]$, $\left\|L_{\mathbf{t}}(g)\right\|_{C\left[-1,1\right]} \leq \left\|g\right\|_{C\left[-1,1\right]}$ and
$$\left\|L_{\mathbf{t}}(g)-g\right\|_{C\left[-1,1\right]} \leq 2 \omega\left(g, 1/N\right),$$
where $\omega(g, \mu)$ is the modulus of continuity of $g$ given by
\[\omega(g, \mu)=\sup_{|t|\leq \mu} \left\{|g(v)-g(v+t)|: v, v+t \in[-1,1]|\right\}.\]
\end{lemma}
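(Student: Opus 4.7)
The plan is to recognize the building blocks $\delta_i$ as standard piecewise-linear hat functions associated with the mesh $\mathbf{t}$, verify that they form a nonnegative partition of unity on $[-1,1]$, and deduce both conclusions from the usual nodal-interpolation estimates. The argument is elementary; I do not expect any real obstacle, only a careful bookkeeping of the indices at the boundary.

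First I would compute $\delta_i(u)$ explicitly on each of the four regions determined by $t_{i-1}<t_i<t_{i+1}$. Using $t_{i+1}-t_i=t_i-t_{i-1}=1/N$ together with the definition of $\sigma$, one checks that $\delta_i$ vanishes outside $[t_{i-1},t_{i+1}]$, equals $N(u-t_{i-1})$ on $[t_{i-1},t_i]$, and equals $N(t_{i+1}-u)$ on $[t_i,t_{i+1}]$. In particular $\delta_i\geq 0$, $\delta_i(t_j)=\mathbf{1}_{\{i=j\}}$, and each $\delta_i$ is the classical nodal hat function of linear spline interpolation with peak value $1$ at $t_i$.

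Next I would establish the partition-of-unity identity $\sum_{i=2}^{2N+2}\delta_i(u)=1$ for $u\in[-1,1]$. Since $[-1,1]=[t_2,t_{2N+2}]$, every such $u$ lies in some subinterval $[t_k,t_{k+1}]$ with $2\leq k\leq 2N+1$, and on this subinterval only $\delta_k$ and $\delta_{k+1}$ are nonzero; summing the explicit expressions gives $N(t_{k+1}-u)+N(u-t_k)=N(t_{k+1}-t_k)=1$. The first assertion then follows immediately: $|L_{\mathbf t}(g)(u)|\leq \|g\|_{C[-1,1]}\sum_i\delta_i(u)=\|g\|_{C[-1,1]}$.

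For the approximation estimate, I would rewrite $L_{\mathbf t}(g)(u)-g(u)=\sum_{i=2}^{2N+2}(g(t_i)-g(u))\delta_i(u)$ by subtracting $g(u)\cdot 1=g(u)\sum_i\delta_i(u)$. For $u\in[t_k,t_{k+1}]$ only the two terms $i=k,k+1$ survive, and $t_k,t_{k+1},u\in[-1,1]$ with $|u-t_k|,|u-t_{k+1}|\leq 1/N$. Hence each factor $|g(t_i)-g(u)|$ is bounded by $\omega(g,1/N)$, and the triangle inequality combined with $\delta_k(u)+\delta_{k+1}(u)=1$ already yields $\omega(g,1/N)$, which is comfortably within the stated bound $2\omega(g,1/N)$. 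The only subtlety requiring care is checking that all knots $t_i$ for $i=2,\ldots,2N+2$ lie in $[-1,1]$, so that the modulus of continuity (defined on $[-1,1]$) is applicable; the two auxiliary knots $t_1=-1-1/N$ and $t_{2N+3}=1+1/N$ that extend the mesh are used only to make the endpoint hat functions $\delta_2$ and $\delta_{2N+2}$ well-defined and do not appear as evaluation nodes in $L_{\mathbf t}$.
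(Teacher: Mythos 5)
Your proof is correct: the identification of the $\delta_i$ as nodal hat functions, the partition-of-unity identity $\sum_{i=2}^{2N+2}\delta_i\equiv 1$ on $[-1,1]$, and the two-term local estimate are exactly the standard argument for this quasi-interpolation operator, and the paper itself gives no proof (the lemma is quoted from \cite{fang2020theory}), so there is nothing different to compare against. Note that your argument in fact yields the sharper bound $\omega(g,1/N)$, which of course implies the stated $2\,\omega(g,1/N)$.
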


For the convenience of counting free parameter numbers, we introduce a linear operator ${\mathcal L}_N: \RR^{2N+1} \to \RR^{2N+3}$ given for $\zeta =(\zeta_i)_{i=1}^{2N+1} \in \RR^{2N+1}$ by
\begin{equation}\label{diffoperator}
\left({\mathcal L}_N (\zeta)\right)_i =
\begin{cases}
\zeta_2,                            &\text{for}~i=1,\\
\zeta_3 -2\zeta_2,                &\text{for}~i=2,\\
\zeta_{i-1} -2\zeta_i +\zeta_{i+1},   &\text{for}~3\leq i \leq 2N+1,\\
\zeta_{2N+1} -2\zeta_{2N+2},        &\text{for}~i=2N+2,\\
\zeta_{2N+2},                       &\text{for}~i=2N+3.
\end{cases}
\end{equation}
An important property of the operator ${\mathcal L}_N$ is to express the approximation operator $L_{\textbf{t}}$ on $C[-1,1]$ in terms of $\{\sigma\left(\cdot-t_{j}\right)\}_{j=1}^{2N+3}$ as
\begin{equation}\label{splineiden}
L_{\textbf t}(f)=N \sum_{i=1}^{2N+3} \left({\mathcal L}_N \left(\left\{f(t_k)\right\}_{k=2}^{2N+2}\right)\right)_i \sigma\left(\cdot-t_{i}\right), \quad \forall f\in C[-1,1].
\end{equation}

\begin{lemma}\label{convolutionalfactorization}[\cite{zhou2020universality}]
Let $S \geq 2$ and $W = \left(W_k\right)_{k=-\infty}^{\infty}$ be a sequence supported in $\left\{0, \cdots, \mathcal{M}\right\}$ with $\mathcal{M} \geq 0$.
Then there exists a finite sequence of filters $\left\{w^{(j)}\right\}_{j=1}^p$ each supported in $\left\{0,\cdots, S\right\}$ with $p\leq \lceil \frac{\mathcal{M}}{S-1}\rceil$ such that the following convolutional factorization holds true
\begin{equation*}
W = w^{(p)} \ast w^{(p-1)} \ast \cdots \ast w^{(2)} \ast w^{(1)}.
\end{equation*}
\end{lemma}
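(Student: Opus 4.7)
The plan is to pass from sequences to generating polynomials (``symbols''), under which convolution becomes polynomial multiplication, and then to factor the symbol of $W$ over $\RR$ into irreducible pieces that can be greedily bundled into factor polynomials of degree at most $S$. For a sequence $u$ supported in $\{0,\ldots,K\}$, let $\tilde u(z)=\sum_{k=0}^{K}u_k z^k$. A direct index calculation shows $\widetilde{u\ast v}(z)=\tilde u(z)\tilde v(z)$, so the desired convolutional factorization of $W$ is equivalent to a polynomial factorization
\[
\tilde W(z)=\widetilde{w^{(p)}}(z)\,\widetilde{w^{(p-1)}}(z)\cdots\widetilde{w^{(1)}}(z)
\]
in $\RR[z]$ with $\deg\widetilde{w^{(j)}}\leq S$ for every $j$. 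I may assume $W\not\equiv 0$, and by discarding trailing zeros I may further take $\tilde W$ to have effective degree $\mathcal{M}'\leq\mathcal{M}$; since $\lceil\mathcal{M}'/(S-1)\rceil\leq\lceil\mathcal{M}/(S-1)\rceil$, proving the bound for $\mathcal{M}'$ proves it for $\mathcal{M}$.

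Next, by the Fundamental Theorem of Algebra together with the fact that non-real roots of a real polynomial occur in complex-conjugate pairs, I factor
\[
\tilde W(z)=c\prod_{i=1}^{r}\ell_i(z)\prod_{j=1}^{s}q_j(z),
\]
where $c\in\RR$ is the leading coefficient, each $\ell_i$ is a real linear factor, each $q_j$ is an irreducible real quadratic, and $r+2s=\mathcal{M}'$. Each individual irreducible factor has degree at most $2\leq S$, so on its own it corresponds to a real sequence supported in $\{0,\ldots,S\}$.

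I then group these $r+s$ irreducible factors greedily: enumerate them in any order, accumulate their product one factor at a time, and open a new group whenever adding the next factor would raise the accumulated degree above $S$. The scalar $c$ is absorbed into the first group. For every closed (non-final) group $g$, the factor that triggered closure has degree $e\in\{1,2\}$ with $\deg g+e>S$, whence $\deg g\geq S-e+1\geq S-1$. Summing over the $p-1$ closed groups and adding $\deg g_p\geq 1$ for the final group gives $(p-1)(S-1)+1\leq\mathcal{M}'$. A short case check on residues of $\mathcal{M}$ modulo $S-1$ translates this to $p\leq\lceil\mathcal{M}/(S-1)\rceil$: writing $\mathcal{M}=k(S-1)+r$ with $0\leq r<S-1$, the inequality $(p-1)(S-1)\leq\mathcal{M}-1$ forces $p\leq k$ when $r=0$ and $p\leq k+1$ otherwise, matching the ceiling. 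Reading off the coefficient sequence of each group product yields the filters $w^{(j)}$ supported in $\{0,\ldots,S\}$ whose convolution reassembles $W$.

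The algebraic backbone---symbol calculus plus real irreducible factorization---is entirely standard, so the main obstacle is the combinatorial accounting in the greedy step: I must confirm that even in the adversarial case of many irreducible quadratic factors, which can each ``waste'' one unit of degree capacity when $S$ is odd, the total number of groups still fits under $\lceil\mathcal{M}/(S-1)\rceil$ rather than $\lceil\mathcal{M}/(S-1)\rceil+1$. The key lemma $(p-1)(S-1)\leq\mathcal{M}-1$ extracted above is precisely what makes the ceiling estimate sharp in every residue class.
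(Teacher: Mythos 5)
Your proposal is correct, and it is essentially the standard argument behind this lemma: the paper itself does not prove it but quotes it from \cite{zhou2020universality}, whose proof likewise passes to the polynomial symbol, factors it over $\RR$ into linear and irreducible quadratic factors, and bundles them into blocks of degree at most $S$, with each non-final block of degree at least $S-1$ yielding the $\left\lceil \mathcal{M}/(S-1)\right\rceil$ bound. Your greedy accounting $(p-1)(S-1)\leq \mathcal{M}-1$ and the residue check are sound (only the degenerate case where $\tilde W$ is a nonzero constant needs the trivial remark that one then takes $p=1$ with $w^{(1)}$ a multiple of the delta sequence), so nothing essential is missing.
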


\begin{lemma}\label{productofmatrix}[\cite{zhou2020universality}]
Let $\{w^{(k)}\}_{k=1}^J$ be a set of sequences supported in $\{0,1,\ldots, S\}$. Then
\begin{equation}
\begin{aligned}
T^{(J)} \cdots  T^{(2)}  T^{(1)} = T^{\left(J,1\right)} :=\left(W_{i-k}\right)_{i=1, \dots, d+JS, k=1, \dots, d} \in \RR^{(d+JS) \times d},
\end{aligned}
\end{equation}
is a Toeplitz matrix associated with the filter $W=w^{(J)} \ast \cdots \ast w^{(2)} \ast w^{(1)}$ supported in $\left\{0, 1, \cdots, JS\right\}$.
\end{lemma}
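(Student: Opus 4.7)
The plan is to prove Lemma \ref{productofmatrix} by induction on $J$, using the basic fact that multiplication of Toeplitz matrices of the type in (\ref{Toeplitz}) corresponds to convolution of the underlying filters. For the base case $J=1$, the identity is immediate from the definition: $T^{(1)}=T^{w^{(1)}}$ is precisely the Toeplitz matrix associated with the filter $w^{(1)}$, which is supported in $\{0,1,\ldots,S\}$ and lies in $\RR^{(d+S)\times d}$.

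For the inductive step I would assume the conclusion for $J-1$, so that $T^{(J-1)}\cdots T^{(1)} = T^{V}$, where $V := w^{(J-1)}\ast\cdots\ast w^{(1)}$ is supported in $\{0,1,\ldots,(J-1)S\}$ and $T^{V}\in\RR^{(d+(J-1)S)\times d}$ has entries $T^{V}_{i,k}=V_{i-k}$. The core computation is then to expand, for $1\le k\le d$ and $1\le i\le d+JS$,
\[
\bigl(T^{(J)}T^{V}\bigr)_{i,k} = \sum_{\ell=1}^{d+(J-1)S} w^{(J)}_{i-\ell}\, V_{\ell-k},
\]
and, after the substitution $j=\ell-k$, to recognize the inner sum as $(w^{(J)}\ast V)_{i-k}$. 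Combining this with the support rule for discrete convolution shows that $w^{(J)}\ast V$ is supported in $\{0,1,\ldots,JS\}$, and associativity of convolution gives $w^{(J)}\ast V = W$. Hence $T^{(J)}\cdots T^{(1)} = T^{W}$, which is exactly the claim.

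The only real obstacle is the bookkeeping at the boundaries: one must confirm that extending the summation index $\ell$ from the finite range $\{1,\ldots,d+(J-1)S\}$ to all of $\ZZ$ is harmless, because $w^{(J)}_{i-\ell}$ vanishes outside $\ell\in\{i-S,\ldots,i\}$ and $V_{\ell-k}$ vanishes outside $\ell\in\{k,\ldots,k+(J-1)S\}$, so the extra terms contribute zero. One also has to check that the output index range $\{1,\ldots,d+JS\}$ is indeed the correct row index set for a Toeplitz matrix built from a filter of support length $JS+1$ acting on $d$-dimensional vectors. Both checks are routine, so the induction closes cleanly.
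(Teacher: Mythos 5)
Your induction argument is correct: the entrywise computation $\bigl(T^{(J)}T^{V}\bigr)_{i,k}=\sum_{\ell}w^{(J)}_{i-\ell}V_{\ell-k}=(w^{(J)}\ast V)_{i-k}$ is valid, and your boundary check (that the truncated index range $\{1,\ldots,d+(J-1)S\}$ already contains the full support of $\ell\mapsto V_{\ell-k}$ for every $1\le k\le d$) is exactly the point that needs verifying, so the induction closes. The paper itself gives no proof of this lemma---it is imported from \cite{zhou2020universality}---and your argument is the standard one used there, so there is nothing further to compare.
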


\begin{lemma}\label{lemma:Hoeffding}[\cite{cucker2007learning}]
Let $\eta$ be a random variable on a probability space $\mathcal{Z}$ with mean $\mathbb{E}(\eta)=\mu$, variance $\sigma^2(\eta)=\sigma^2$, and satisfying $\left| \eta(z) - \mathbb{E} \left(\eta\right)\right| \leq B_\eta$ for almost $z \in \mathcal{Z}$. Then for any $\epsilon >0$,
\begin{equation*}
\mathbb{P} \left\{\frac{1}{m} \sum_{i=1}^m \eta(z_i) - \mu < \epsilon \right\}
\geq 1- \exp \left\{- \frac{m \epsilon^2}{2 \left( \sigma^2 + \frac{1}{3} B_\eta \epsilon \right)}\right\}.
\end{equation*}
\end{lemma}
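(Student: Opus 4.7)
This is the classical one-sided Bernstein inequality for bounded i.i.d.\ random variables, and my plan follows the standard Cram\'er--Chernoff route.

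First I would reduce to the centered case. Let $X_i = \eta(z_i) - \mu$, so the $X_i$ are i.i.d.\ with $\mathbb{E}(X_i)=0$, $\mathbb{E}(X_i^2)=\sigma^2$, and $|X_i|\le B_\eta$ almost surely. The event in question becomes $\{S_m \ge m\epsilon\}$ with $S_m=\sum_{i=1}^m X_i$, and for any $\lambda>0$ Markov's inequality applied to $e^{\lambda S_m}$ gives
\begin{equation*}
\mathbb{P}\Bigl\{S_m \ge m\epsilon\Bigr\} \;\le\; e^{-\lambda m\epsilon}\bigl(\mathbb{E}\,e^{\lambda X_1}\bigr)^m.
\end{equation*}

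Next I would control the moment generating function using the variance and the boundedness. The key moment inequality is $\mathbb{E}|X_1|^k \le B_\eta^{k-2}\,\mathbb{E}(X_1^2) = \sigma^2 B_\eta^{k-2}$ for every $k\ge 2$. Expanding the exponential and using $\mathbb{E}(X_1)=0$,
\begin{equation*}
\mathbb{E}\,e^{\lambda X_1} \;=\; 1 + \sum_{k\ge 2}\frac{\lambda^k\,\mathbb{E}(X_1^k)}{k!}
\;\le\; 1 + \frac{\sigma^2\lambda^2}{2}\sum_{k\ge 2}\frac{2\,(\lambda B_\eta)^{k-2}}{k!}.
\end{equation*}
The elementary inequality $k!\ge 2\cdot 3^{k-2}$ for $k\ge 2$ (a one-line induction) bounds the inner sum by the geometric series $\sum_{j\ge 0}(\lambda B_\eta/3)^j = (1-\lambda B_\eta/3)^{-1}$, valid whenever $\lambda < 3/B_\eta$. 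Together with $1+x\le e^x$ this gives
\begin{equation*}
\mathbb{E}\,e^{\lambda X_1} \;\le\; \exp\!\Bigl(\tfrac{\sigma^2\lambda^2/2}{1-\lambda B_\eta/3}\Bigr),
\end{equation*}
so the Chernoff bound becomes $\mathbb{P}\{S_m\ge m\epsilon\}\le \exp\bigl(-\lambda m\epsilon + \tfrac{m\sigma^2\lambda^2/2}{1-\lambda B_\eta/3}\bigr)$.

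Finally I would optimize in $\lambda\in(0,3/B_\eta)$. Choosing $\lambda^\star = \epsilon/(\sigma^2 + B_\eta\epsilon/3)$ (which lies in the admissible range since $\lambda^\star B_\eta/3 = B_\eta\epsilon/(3\sigma^2+B_\eta\epsilon)<1$) makes $1-\lambda^\star B_\eta/3 = \sigma^2/(\sigma^2+B_\eta\epsilon/3)$, so the exponent collapses to
\begin{equation*}
-\lambda^\star m\epsilon + \frac{m\sigma^2(\lambda^\star)^2/2}{1-\lambda^\star B_\eta/3} \;=\; -\frac{m\epsilon^2}{\sigma^2+B_\eta\epsilon/3} + \frac{m\epsilon^2/2}{\sigma^2+B_\eta\epsilon/3} \;=\; -\frac{m\epsilon^2}{2(\sigma^2+B_\eta\epsilon/3)},
\end{equation*}
yielding the stated inequality after taking complements. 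There is no real obstacle here: the only piece that requires care is the moment-series bound leading to the denominator $1-\lambda B_\eta/3$ (the factor of $1/3$ in the conclusion comes precisely from the constant in $k!\ge 2\cdot 3^{k-2}$), and verifying that the explicit choice $\lambda^\star$ indeed satisfies $\lambda^\star B_\eta<3$ so that the MGF bound applies.
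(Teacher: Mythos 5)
Your proof is correct and complete: the centering, the moment bound $\mathbb{E}|X_1|^k\le \sigma^2 B_\eta^{k-2}$, the factorial estimate $k!\ge 2\cdot 3^{k-2}$ giving the denominator $1-\lambda B_\eta/3$, the admissibility check $\lambda^\star B_\eta<3$, and the algebra collapsing the exponent to $-\frac{m\epsilon^2}{2(\sigma^2+B_\eta\epsilon/3)}$ all hold. The paper itself gives no proof of this lemma --- it is quoted verbatim from the cited reference (Cucker and Zhou, 2007) --- and your Cram\'er--Chernoff argument is exactly the standard one-sided Bernstein proof found there, so there is nothing to add beyond noting that the interchange of expectation and series in the MGF expansion is justified by the boundedness of $X_1$.
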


\begin{lemma}\label{lemma:Hoeffding2}[\cite{zhou2006approximation}]
Let $\mathcal{G}$ be a set of continuous functions on $\mathcal{Z}$ such that for some $B'>0$, $\tilde{c}>0$, $\left|\tilde{f} - \mathbb{E}\left(\tilde{f}\right)\right| \leq B'$ almost surely and $\mathbb{E}\left(\tilde{f}^2\right) \leq \tilde{c} \mathbb{E} \left(\tilde{f}\right)$ for all $\tilde{f} \in \mathcal{G}$. Then for any $ \epsilon >0$,
\begin{equation*}
\mathbb{P} \left\{\sup_{\tilde{f} \in \mathcal{G}}  \frac{\mathbb{E}\left(\tilde{f}\right) - \frac{1}{m} \sum_{i=1}^m \tilde{f} \left(z_i\right) }{\sqrt{\mathbb{E}\left(\tilde{f}\right)+ \epsilon}} > \sqrt{\epsilon}\right\} \leq \mathcal{N} \left( \epsilon, \mathcal{G}, L^\infty \left( \mathcal{X}\right)\right) \exp \left\{-\frac{m \epsilon}{2\tilde{c}+\frac{2B'}{3}}\right\}.
\end{equation*}
\end{lemma}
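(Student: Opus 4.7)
The plan is to follow the now-standard oracle-inequality template for clipped ERM, built out of a one-sided Bernstein bound at a fixed comparator plus a normalized uniform Bernstein bound controlled by the covering-number hypothesis. First, I would use that $f_{D,\mathcal H}$ is an empirical minimizer over $\mathcal H$ and that $\mathcal E_D(\pi_M f)\le \mathcal E_D(f)$ (clipping can only decrease the empirical squared loss when $|y|\le M$). Combining these, for any $h\in\mathcal H$ the excess risk admits the decomposition
\begin{equation*}
\mathcal E(\pi_M f_{D,\mathcal H})-\mathcal E(f_\rho)\le S_1(n,\mathcal H)+S_2(n,\mathcal H)+\mathcal D(\mathcal H),
\end{equation*}
with $\mathcal D(\mathcal H)=\|h-f_\rho\|_{\rho_{\mathcal X}}^2$, $S_1$ the empirical-mean deviation of $\eta(z):=(y-h(x))^2-(y-f_\rho(x))^2$, and $S_2$ the uniform empirical process over the clipped class. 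The second exponential in the stated probability will come from $S_1$, the first from $S_2$.

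For $S_1$ I would compute the three Bernstein ingredients: $|\eta-\mathbb E\eta|\le 2(3M+\|h\|_\infty)^2$ (since $|y|,|f_\rho|\le M$ and $\|\pi\|$ bounds don't enter here) and $\mathbb E(\eta^2)\le(3M+\|h\|_\infty)^2\,\mathcal D(\mathcal H)$ via the identity $\eta=(2y-h-f_\rho)(f_\rho-h)$. Plugging into Lemma~\ref{lemma:Hoeffding} yields $\mathbb P\{S_1\ge \varepsilon\}\le\exp\{-n\varepsilon^2/[2(3M+\|h\|_\infty)^2(\mathcal D(\mathcal H)+\tfrac{2}{3}\varepsilon)]\}$, which after setting $\varepsilon=\delta/4$ (see below) matches the second exponential $\exp\{-3n\delta^2/[16(3M+\|h\|_\infty)^2(6\|h-f_\rho\|_{\rho_{\mathcal X}}^2+\delta)]\}$ in the theorem.

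For $S_2$ I would pass to the shifted loss class $\mathcal G=\{(\pi_M f(x)-y)^2-(f_\rho(x)-y)^2:f\in\mathcal H\}$, verify $|\tilde f-\mathbb E\tilde f|\le 16M^2$ and the variance-to-mean bound $\mathbb E(\tilde f^2)\le 16M^2\,\mathbb E(\tilde f)$ (using $|\pi_M f|,|f_\rho|,|y|\le M$), and apply Lemma~\ref{lemma:Hoeffding2} with $B'=\tilde c=16M^2$. This gives, with probability at least $1-\mathcal N(\varepsilon,\mathcal G)\exp\{-3n\varepsilon/(512M^2)\}$, the ratio bound $[\mathcal E(\pi_M f)-\mathcal E(f_\rho)-(\mathcal E_D(\pi_M f)-\mathcal E_D(f_\rho))]\le\sqrt{\varepsilon(\mathcal E(\pi_M f)-\mathcal E(f_\rho)+\varepsilon)}$ uniformly in $f\in\mathcal H$. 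The covering-number hypothesis is transferred from $\mathcal H$ to $\mathcal G$ via the Lipschitz estimate $|\tilde f_1-\tilde f_2|\le 4M\|f_1-f_2\|_\infty$, so $\log\mathcal N(\varepsilon,\mathcal G)\le C_1 n_1\log(4M/\varepsilon)+C_2 n_2\log n_2$; this is exactly what produces the first exponential in the conclusion.

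Finally, applying the elementary $\sqrt{\varepsilon(x+\varepsilon)}\le \tfrac12 x+\tfrac{3}{2}\varepsilon$ to the $S_2$ inequality absorbs $\tfrac12(\mathcal E(\pi_M f_{D,\mathcal H})-\mathcal E(f_\rho))$ onto the left-hand side of the decomposition, leaving $\mathcal E(\pi_M f_{D,\mathcal H})-\mathcal E(f_\rho)\le 2\mathcal D(\mathcal H)+4\varepsilon$ on the good event (union bound over the two exponentials). Renaming $\delta=4\varepsilon$ and plugging $\varepsilon=\delta/4$ into the two probability expressions produces precisely the stated $\log(16M/\delta)$ and $3n\delta/(512M^2)$ constants, and the $6\|h-f_\rho\|_{\rho_{\mathcal X}}^2+\delta$ inside the second exponential comes from $\mathcal D(\mathcal H)+\tfrac{2}{3}\varepsilon$ after multiplying through. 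The main obstacle is not conceptual but bookkeeping: getting the normalization argument in the $S_2$ step to leave exactly a factor $\tfrac12$ on the excess risk (so that absorption yields a clean $2\mathcal D(\mathcal H)$ rather than a worse multiple), and aligning the constants $(4M,16M,128,512)$ consistently after the rescaling $\delta=4\varepsilon$ so that both exponentials emerge in the precise form written in the theorem.
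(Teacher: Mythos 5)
Your proposal does not prove the statement in question. The statement is Lemma~\ref{lemma:Hoeffding2} itself: a uniform, ratio-type Bernstein inequality over a class $\mathcal{G}$ satisfying $\left|\tilde f-\mathbb{E}(\tilde f)\right|\le B'$ and the variance--mean condition $\mathbb{E}(\tilde f^2)\le \tilde c\,\mathbb{E}(\tilde f)$, with the failure probability controlled by the covering number $\mathcal{N}(\epsilon,\mathcal{G},L^\infty)$. What you have sketched instead is the proof of the oracle inequality, Theorem~\ref{theorem:coveringnumber} (error decomposition into $S_1$, $S_2$, $\mathcal{D}(\mathcal{H})$; Bernstein at the fixed comparator $h$; transfer of the covering number from $\mathcal{H}$ to $\mathcal{G}$ via the $4M$-Lipschitz estimate; absorption and the rescaling $\delta=4\epsilon$). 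That argument is essentially the paper's own proof of Theorem~\ref{theorem:coveringnumber}, but as a proof of Lemma~\ref{lemma:Hoeffding2} it is circular: in your $S_2$ step you explicitly ``apply Lemma~\ref{lemma:Hoeffding2} with $B'=\tilde c=16M^2$,'' i.e.\ you invoke the very statement you were asked to establish. (In the paper this lemma is quoted from \cite{zhou2006approximation} and used as a black box; it is not a consequence of the oracle inequality.)

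A correct proof of the lemma requires two ingredients that are absent from your writeup. First, for a single fixed $\tilde f$, apply the one-sided Bernstein inequality (the same mechanism as Lemma~\ref{lemma:Hoeffding}) with deviation level $t=\sqrt{\epsilon\left(\mathbb{E}(\tilde f)+\epsilon\right)}$; using $\sigma^2(\tilde f)\le \mathbb{E}(\tilde f^2)\le \tilde c\,\mathbb{E}(\tilde f)$ and $t\le \mathbb{E}(\tilde f)+\epsilon$, the exponent $\frac{m t^2}{2\left(\sigma^2+\frac{1}{3}B' t\right)}$ is bounded below by $\frac{m\epsilon}{2\tilde c+\frac{2B'}{3}}$, which is exactly the single-function version of the claimed tail. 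Second, pass from one function to the supremum over $\mathcal{G}$ by taking an $\epsilon$-net $\{\tilde f_j\}$ of $\mathcal{G}$ in $L^\infty(\mathcal{Z})$ of cardinality $\mathcal{N}(\epsilon,\mathcal{G},L^\infty)$, showing that if some $\tilde f$ in the ball of radius $\epsilon$ around $\tilde f_j$ violates the normalized bound then $\tilde f_j$ violates a comparable bound (here one uses $\left|\mathbb{E}(\tilde f)-\mathbb{E}(\tilde f_j)\right|\le\epsilon$, $\left|\frac1m\sum_i \tilde f(z_i)-\frac1m\sum_i \tilde f_j(z_i)\right|\le\epsilon$, and the $\sqrt{\mathbb{E}(\tilde f)+\epsilon}$ normalization in the denominator), and concluding with a union bound over the net. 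Without this net-plus-single-function-Bernstein structure there is no route to the stated inequality, so as it stands the proposal has a genuine gap: it proves a different (downstream) result and assumes the target.
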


\end{proof}

\end{document}